\documentclass{article}

\usepackage{iclr2027_conference,times}

\usepackage{amsmath,amssymb,amsthm}
\usepackage{graphicx}
\usepackage{booktabs}
\usepackage{multirow}
\usepackage{float}
\usepackage{placeins}
\usepackage{wrapfig}
\usepackage[dvipsnames,table]{xcolor}
\usepackage{threeparttable}
\usepackage{array}
\usepackage{enumitem}
\usepackage{tikz}
\usepackage{tcolorbox}
\tcbuselibrary{breakable}
\usetikzlibrary{arrows.meta,backgrounds,calc,fit,positioning,shapes.geometric}
\usepackage{pifont}
\usepackage{hyperref}
\usepackage{url}

\definecolor{ptblue}{HTML}{4477AA}
\definecolor{ptcyan}{HTML}{66CCEE}
\definecolor{ptgreen}{HTML}{228833}
\definecolor{ptpurple}{HTML}{AA3377}
\definecolor{ptgrey}{HTML}{BBBBBB}
\definecolor{headerblue}{RGB}{220,235,241}
\definecolor{groupgray}{RGB}{244,246,248}
\definecolor{ourscolor}{RGB}{246,238,215}
\definecolor{onlinecolor}{RGB}{237,224,186}
\definecolor{increasecolor}{HTML}{FF4B33}
\definecolor{decreasecolor}{HTML}{404999}
\definecolor{keyaccent}{HTML}{1F3A5F}
\newcommand{\pointmark}[1]{\textcolor{MidnightBlue}{\ding{#1}}}
\newcommand{\key}[1]{\textbf{#1}}
\newcommand{\finding}[1]{\textcolor{keyaccent}{\textbf{#1}}}

\newcommand{\best}[1]{\textbf{#1}}
\newcommand{\ours}{RIVET}  
\newcommand{\oursfull}{Response-Informed Value Estimation for Turn-Level Routing}

\newcommand{\Ret}{\mathsf{R}}
\newcommand{\E}{\mathbb{E}}
\newcommand{\Var}{\operatorname{Var}}
\newcommand{\Ustar}{U^{\star}}
\newcommand{\costref}{c_{\mathrm{ref}}}
\newcommand{\sigfut}{\sigma_{\mathrm{future}}^{2}}
\newcommand{\Qroute}{Q_{\mathrm{route}}}
\newcommand{\Qresp}{Q_{\mathrm{resp}}}
\newcommand{\froute}{f_{\mathrm{route}}}
\newcommand{\fresp}{f_{\mathrm{resp}}}
\newcommand{\Lroute}{\mathcal{L}_{\mathrm{route}}}
\newcommand{\Lresp}{\mathcal{L}_{\mathrm{resp}}}
\newtheorem{theorem}{Theorem}[section]
\newtheorem{proposition}[theorem]{Proposition}
\newtheorem{assumption}[theorem]{Assumption}
\theoremstyle{remark}
\newtheorem{remark}[theorem]{Remark}
\newcommand{\metricup}[2]{%
  #1\raisebox{-0.45ex}{\hspace{0.15em}\scriptsize
  \textcolor{increasecolor}{\ensuremath{\uparrow}\,#2}}}
\newcommand{\metricdown}[2]{%
  #1\raisebox{-0.45ex}{\hspace{0.15em}\scriptsize
  \textcolor{decreasecolor}{\ensuremath{\downarrow}\,#2}}}
\newcommand{\tablevsep}{\hspace{3pt}{\color{black!25}\vrule width 0.3pt}\hspace{3pt}}
\newcommand{\offlineicon}{%
  \tikz[baseline=-0.45ex]{\node[
    star,star points=5,star point ratio=2.15,
    minimum size=1.35ex,inner sep=0pt,
    draw=MidnightBlue,fill=white,line width=0.32pt] {};}
  \hspace{0.20em}}
\newcommand{\onlineicon}{%
  \tikz[baseline=-0.45ex]{\node[
    star,star points=5,star point ratio=2.15,
    minimum size=1.35ex,inner sep=0pt,
    draw=MidnightBlue,fill=Goldenrod,line width=0.32pt] {};}
  \hspace{0.20em}}

\hypersetup{
  colorlinks=true,
  linkcolor=MidnightBlue,
  citecolor=MidnightBlue,
  urlcolor=MidnightBlue
}

\title{Response-Informed Value Estimation\\for Cost-Aware Multi-Turn LLM Routing}
\author{%
Haotian Yan$^{1,4}$, Jiaqi Liu$^{2}$, Bingyi Jing$^{3,4}$, Tianjun Yao$^{4}$\\
\textsuperscript{1}Southwest University of Finance and Economics\\
\textsuperscript{2}Dalian University of Technology\\
\textsuperscript{3}The Chinese University of Hong Kong, Shenzhen\\
\textsuperscript{4}Shenzhen Loop Area Institute%
}

\iclrfinalcopy

\begin{document}

\hyphenpenalty=10000
\exhyphenpenalty=10000
\emergencystretch=1.5em

\maketitle

\begin{abstract}
Multi-turn LLM routing selects a model at every turn before its response is
available, yet task success is revealed only when the interaction ends,
leaving each intermediate choice without a label of its own. Logged
trajectories, however, retain the response each call produced, which exposes
what each turn required and how well the chosen model served it, evidence
that outcomes alone hide. We introduce \ours{}
(\oursfull{}), which exploits this post-call evidence during training while
keeping routing strictly pre-call at deployment. \ours{} jointly trains two
value estimators over a shared state--model representation: a deployable
Routing Value Estimator that scores candidate models from pre-call
information, and a training-only Response-Informed Value Estimator that
additionally observes the realized response. Conditioning on the response
turns response-induced return variation from residual noise into an observed
covariate, and we show that this reduces the leading asymptotic estimation
error of the shared representation. On the MINT benchmark, \ours{} solves
more held-out tasks than the strongest multi-turn baseline while using 74.7\%
less total call cost.
\end{abstract}

\section{Introduction}
\label{sec:introduction}

Routing among language models of different capability and price is a
standard way to trade accuracy for cost: a learned router predicts which
model is worth calling for a
query~\citep{ong2025routellm,ding2024hybrid,chen2024routerdc,
zhuang2025embedllm,feng2024graphrouter,zhang2025avengerspro}, cascades
escalate only when a cheaper answer looks
insufficient~\citep{chen2024frugalgpt,aggarwal2024automix}, and benchmarks
score the resulting accuracy--cost frontier~\citep{hu2024routerbench}. Yet
the workloads where routing matters most are increasingly multi-turn: agents
reason, invoke tools, observe feedback, and decide how to
proceed~\citep{wang2024mint}, so every turn is a fresh routing decision, and
what each turn demands of a model is not something a static strong/weak
ranking can tell (Figure~\ref{fig:response-motivation}).

Multi-turn routing is harder along three axes. \key{Credit assignment is
delayed}: only the final outcome is graded, so intermediate decisions receive
no label of their own. \key{Counterfactual evaluation is
trajectory-dependent}: changing the model at one turn changes every later
state, so alternative routes cannot be scored against a shared context.
\key{Trajectory collection is expensive}: each trajectory is a sequence of
paid model calls, so on-policy learning buys every sample at multi-turn
prices. A multi-turn router must therefore propagate delayed consequences
while extracting as much as possible from the experience already logged.

Logged experience, however, contains more than outcomes. The response
recorded at each call shows what that turn actually required and whether the
chosen model delivered it. Figure~\ref{fig:response-motivation} illustrates
this on a four-turn recommendation task. \key{A response exposes decision
quality, not just spending}: whether it honors the user's hard constraint,
picks up the preference just stated, keeps what earlier turns already
settled, and answers at the length the turn deserves. None of this is
visible in the final outcome, and all of it matters for which model to call
next. \key{No single model wins every turn}: the small model succeeds where
it keeps the constraint, the strongest model fails where it drifts from the
conversation, and the mid-sized model over-deliberates or over-explains.
These failure modes differ from turn to turn, so no fixed strong/weak
ranking captures them. Responses therefore carry turn-level value beyond
cost: they show what the turn asked for and whether the model answered it.
A pre-call router must anticipate these effects from state--model
information alone. This motivates using the realized response as privileged
auxiliary information during training, while keeping routing strictly
pre-call at deployment.

\begin{figure}[H]
  \centering
  \includegraphics[width=0.98\textwidth]{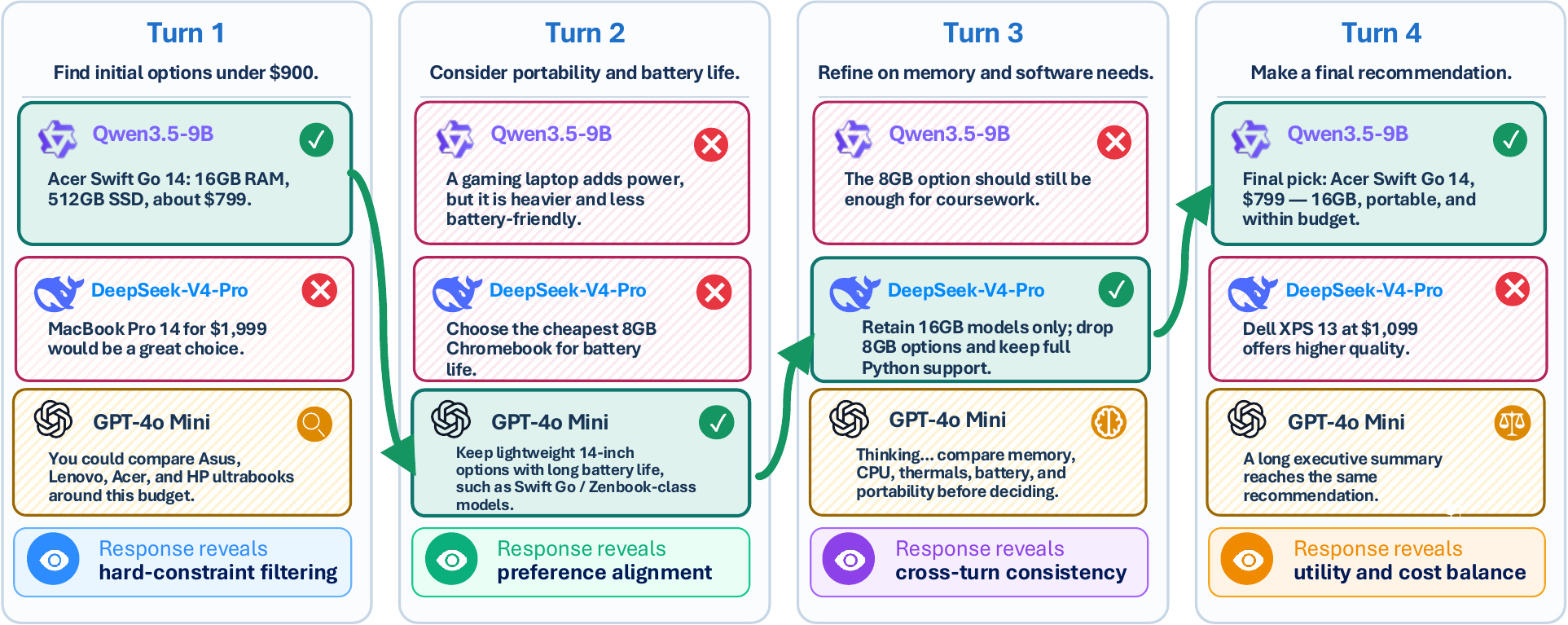}
  \caption{Intermediate responses reveal what outcomes hide. Three candidate
  models on a four-turn recommendation task. Each turn demands a different
  ability (bottom labels), and the response exposes whether the model met it:
  check and cross mark acceptable and unacceptable responses, and the
  remaining icons mark responses that are vague, over-deliberated, or
  needlessly verbose. The green path is the cost-efficient route and switches
  models across turns. Illustrative; each logged turn observes only the
  selected model's response.}
  \label{fig:response-motivation}
\end{figure}

To exploit both properties of logged experience, we introduce
\textbf{\ours{}} (\oursfull{}), a sample-efficient off-policy value-learning
framework for multi-turn LLM routing. \ours{} formulates turn-level model
selection as a cost-aware sequential decision problem with trajectory-level
supervision, reuses logged transitions to propagate future routing
consequences through Bellman updates, and requires only a small online
adaptation budget instead of large-scale on-policy rollout collection.

\ours{} uses the logged responses through two value estimators that share
one trainable state--model representation. The deployed \emph{Routing Value
Estimator} (RVE) conditions only on the current state and candidate model;
the training-only \emph{Response-Informed Value Estimator} (RIVE)
additionally observes the executed response. Both predict the same future
return, but RIVE faces a cleaner problem: response-dependent variation that
RVE must treat as noise is an observed input for RIVE. Because the
representation is shared, what RIVE learns on this cleaner problem improves
the representation RVE relies on; Section~\ref{sec:theory} shows that joint
training suppresses the response-resolved component of return uncertainty in
the estimation error of the shared representation. \key{At deployment RIVE is
discarded}: routing remains strictly pre-call, with no additional model
invocation or post-call stage.

On the MINT multi-turn benchmark, \ours{} improves task accuracy over MTRouter
while reducing model-call cost by 74.7\%. Its cost advantage transfers across
three out-of-distribution benchmarks while maintaining competitive task
performance. Controlled experiments show that the gain depends on RIVE having
access to the executed response (Section~\ref{sec:response-mechanism}).

Our contributions are threefold:
\begin{itemize}[leftmargin=1.6em,itemsep=1pt,topsep=2pt]
  \item[\pointmark{172}] \textbf{Multi-turn routing as off-policy value learning.} We formulate
  turn-level model selection in agentic interaction as a cost-aware sequential
  value-learning problem with delayed trajectory-level supervision, and show
  that a small logged corpus plus limited online adaptation suffices for
  effective routing.
  \item[\pointmark{173}] \textbf{Response-informed auxiliary value learning.} A training-only
  estimator that additionally conditions on the executed response shares a
  state--model representation with the deployable pre-call estimator,
  providing an auxiliary signal without changing the deployment routing path.
  We show this reduces the leading asymptotic estimation error of the shared
  representation.
  \item[\pointmark{174}] \textbf{Sample-efficient multi-turn routing results.} From a small
  logged corpus and a limited online budget, \ours{} reaches the high-accuracy
  end of the measured accuracy--cost frontier on MINT, retains strong cost
  efficiency under out-of-distribution evaluation, and maintains its
  accuracy--cost trade-off under candidate-model unavailability.
\end{itemize}

\section{Related Work}
\label{sec:related-work}

\paragraph{LLM routing.}
Most routers choose a model for one query and stop. Pre-call predictors
estimate query--model compatibility from preference
data~\citep{ong2025routellm}, predicted difficulty~\citep{ding2024hybrid},
contrastive query and model embeddings~\citep{chen2024routerdc,zhuang2025embedllm},
task--query--model graphs~\citep{feng2024graphrouter}, or query
clusters~\citep{zhang2025avengerspro}; cascades inspect a cheap answer
before escalating~\citep{chen2024frugalgpt,aggarwal2024automix}; bandits
adapt the choice online~\citep{dai2025costeffective}; and RouterBench
standardizes accuracy--cost evaluation~\citep{hu2024routerbench}. All score
candidates within a fixed query context. Agentic workloads make routing
sequential, yet the multi-turn setting remains sparsely
studied~\citep{moslem2026dynamicmodelroutingcascading}. Router-R1,
R2-Reasoner, and xRouter instantiate the router as an LLM that interleaves
reasoning with model or tool calls and train it with on-policy policy
gradients~\citep{zhang2025routerr1,shao2025routeandreason,qian2025xroutertrainingcostawarellms,shao2024grpo};
GraphPlanner assembles model--role workflows from prior
interactions~\citep{feng2026graphplanner}; SWE-Router watches a cheap agent's
partial trajectory before a one-shot escalation~\citep{son2026swerouter};
DialRouter distills MCTS rollouts into a dialogue router by behavior
cloning~\citep{zhang2026myopic}; and MTRouter, which shares our decision
unit, selects a model at every turn from a supervised outcome estimator over
history--model embeddings~\citep{zhang2026mtrouter}. These methods buy each
training sample with fresh rollouts, imitate search, or regress outcomes
without propagating value across turns; \key{none learns turn-level action values
off-policy from logged trajectories}.

\paragraph{Value-based RL and privileged supervision.}
Value-based methods are established for language decisions: ILQL learns
token-level Q-functions offline~\citep{snell2022offline}, ArCHer shows that a
turn-level TD critic is far more sample-efficient than on-policy training for
multi-turn agents~\citep{zhou2024archer}, and conservative Q-learning makes
bootstrapping from fixed logs safe~\citep{kumar2020conservative}. \ours{}
brings this machinery to the discrete model-selection action space with
Double DQN targets~\citep{vanhasselt2016deep} and CQL regularization. Its
second ingredient draws on learning with privileged
information~\citep{vapnik2009new,lopez2015unifying} and asymmetric
actor--critic methods, where a training-time critic observes what the
deployed policy cannot~\citep{pinto2017asymmetric,baisero2021unbiased}, cast
here as an auxiliary task~\citep{jaderberg2016reinforcement}. Cascades and
SWE-Router also exploit the response, but at deployment, paying for a call
before deciding; \key{\ours{} uses the response only during training}, so the
deployed router stays pre-call, and we characterize when this auxiliary
supervision improves the shared representation (Section~\ref{sec:theory}).

\section{Problem Formulation}
\label{sec:problem-formulation}

We formulate multi-turn LLM routing as a cost-aware Markov decision process.
For a task $q$ and candidate model set $\mathcal{M}$, the state at turn $t$
is $s_t=(q,\mathcal{H}_{<t})$, where $\mathcal{H}_{<t}$ is the interaction
history before the current model call. The router selects a model
$m_t\sim\pi(\cdot\mid s_t)$, which generates a response
$o_t\sim p_{m_t}(\cdot\mid s_t)$; the environment then produces feedback
$f_t$, and the interaction transitions to $s_{t+1}=T(s_t,o_t,f_t)$. Each
call incurs a cost $c_t=c(s_t,m_t,o_t)$. Let $d_t\in\{0,1\}$ indicate
episode termination and $y\in\{0,1\}$ the final task outcome. The per-turn
reward is
\begin{equation}
  r_t=-\lambda\frac{c_t}{\costref}+d_t y,
  \label{eq:immediate-reward}
\end{equation}
where $\costref>0$ is a reference cost and $\lambda\ge0$ controls the
trade-off between task success and model-call cost. The objective is a
routing policy that maximizes the expected discounted return
\begin{equation}
  J(\pi)=\E_{\tau\sim p_{\pi}}\!\left[
    \sum_{t=1}^{T}\gamma^{t-1}r_t
  \right],
  \label{eq:routing-objective}
\end{equation}
with discount $\gamma\in(0,1]$ and $p_{\pi}$ the trajectory distribution
induced by $\pi$, the model responses, and the environment. Because each
model choice changes the generated response and hence the states encountered
at later turns, routing decisions are \key{sequentially coupled rather than
independent turn-level predictions}.

\section{Method: \ours{}}
\label{sec:method}

\ours{} jointly trains a deployable \textbf{Routing Value Estimator (RVE)}
and a training-only \textbf{Response-Informed Value Estimator (RIVE)} over a
shared state--model representation (Figure~\ref{fig:method-overview}); the
two differ only in the information available to their value heads.

\suppressfloats[t]
\begin{figure}[t]
  \centering
  \includegraphics[width=0.92\linewidth]{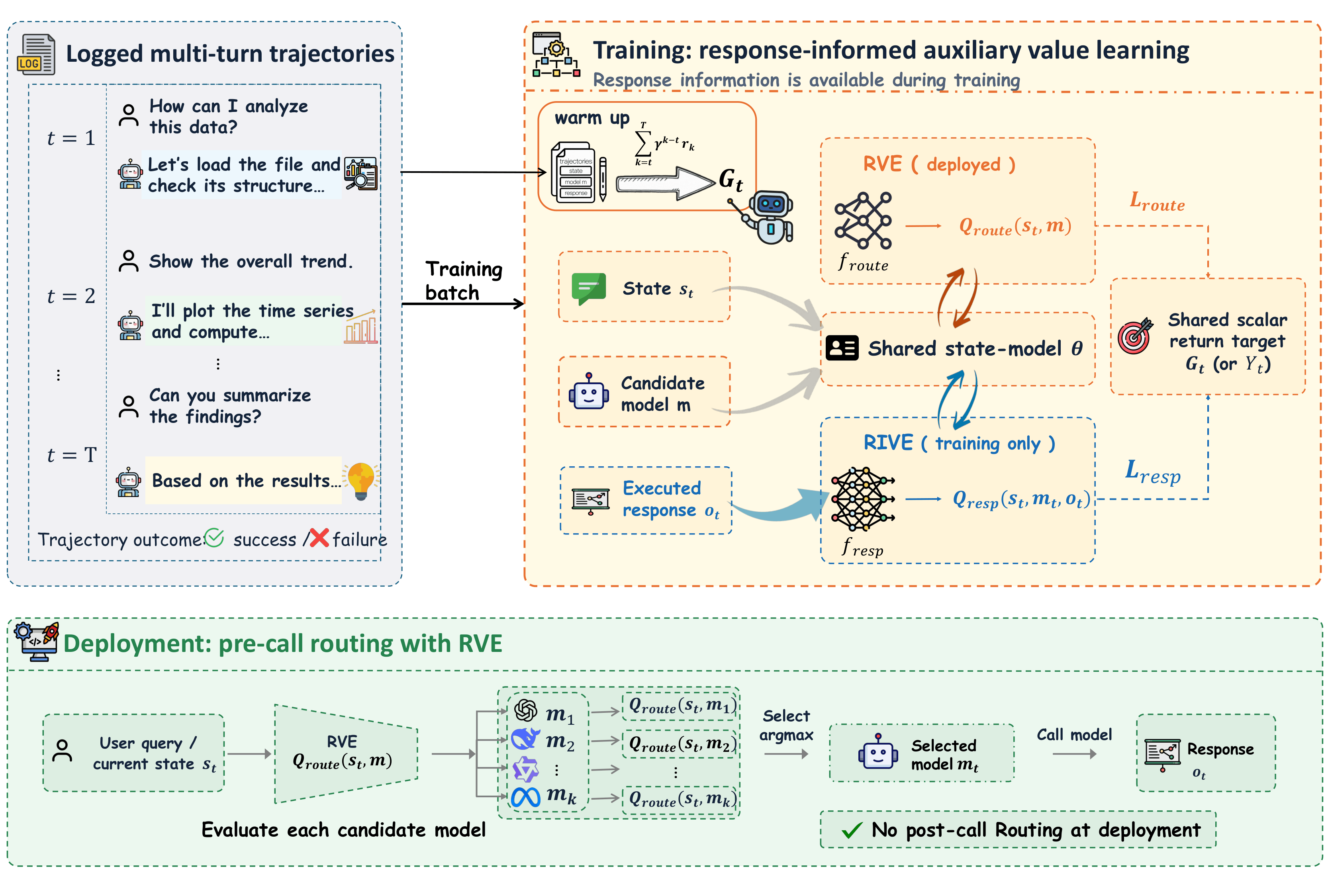}
  \caption{\ours{} separates pre-call deployment from post-call training.
  Both estimators read the same trainable state--model representation
  $z_\theta(s_t,m)$. The Routing Value Estimator (RVE, deployed) scores
  every candidate model from pre-call information and makes the routing
  decision; the Response-Informed Value Estimator (RIVE, training only)
  additionally conditions on the executed response and shapes the shared
  representation through its own regression loss. Both estimators regress to
  the same scalar return target; deployment uses RVE alone.}
  \label{fig:method-overview}
\end{figure}

\subsection{Dual value estimators with a shared state--model representation}
\label{sec:rve-rive-architecture}

\paragraph{Shared state--model representation.}
\ours{} represents each candidate action with a shared trainable
state--model encoder,
\begin{equation}
  z_\theta(s_t,m)
  =a_\theta\!\left(h_t,\;g_\theta(e_m,x_m)\right),
  \label{eq:shared-representation}
\end{equation}
which jointly encodes the current interaction state and the candidate model.
A frozen text encoder maps the serialized state $s_t$ to $h_t$; the
serialization omits earlier model identities. A trainable model-card
projector $g_\theta$ maps each model's text embedding $e_m$ and structured
fields $x_m$ to a model vector, and a trainable state adapter $a_\theta$
fuses it with $h_t$. The same representation feeds both value estimators, so
training signals from pre-call and response-informed prediction both shape
the representation used at deployment.

\paragraph{Routing Value Estimator (RVE).}
Given the shared state--model representation, RVE estimates the pre-call
value of selecting each candidate model,
\begin{equation}
  \Qroute(s_t,m)=\froute\!\left(z_\theta(s_t,m)\right),
  \label{eq:rve}
\end{equation}
and directly determines the deployment-time routing decision,
\begin{equation}
  m_t=\arg\max_{m\in\mathcal{M}}\Qroute(s_t,m).
  \label{eq:deployment-policy}
\end{equation}

\paragraph{Response-Informed Value Estimator (RIVE).}
During training, RIVE augments the shared state--model representation of the
executed model $m_t$ with the realized response $o_t$. Let $e(o_t)$ denote
the response representation, obtained by passing $o_t$ through the frozen
encoder and a trainable response projector. RIVE predicts the same return
target,
\begin{equation}
  \Qresp(s_t,m_t,o_t)=\fresp\!\left(z_\theta(s_t,m_t),\,e(o_t)\right).
  \label{eq:rive}
\end{equation}
\key{RIVE is discarded at deployment and adds no model
call or forward pass at inference time.}
Appendix~\ref{app:architecture-details} specifies the encodings and network
dimensions.

\subsection{Response-informed off-policy learning}
\label{sec:response-learning}
\label{sec:mc-initialization}
\label{sec:td-cql}

\ours{} learns the shared state--model representation in three stages:
it \key{first} grounds both value estimators with completed-trajectory returns,
\key{then} performs off-policy Bellman refinement over logged replay, and \key{finally}
adapts the routing policy with limited online interaction.

\paragraph{Response-informed joint learning.}
Both estimators regress to the same scalar return target. Let
$q_{\mathrm{route},t}=\Qroute(s_t,m_t)$ and
$q_{\mathrm{resp},t}=\Qresp(s_t,m_t,o_t)$, let $\ell_\delta$ denote the
Huber loss, and let $\E_t$ average over the current replay distribution.
For a scalar target $Z_t$, we optimize
\begin{equation}
  \mathcal{L}_{\mathrm{fit}}(Z)
  =
  \E_t\!\left[
    \ell_\delta(q_{\mathrm{route},t},Z_t)
    +
    \beta\,\ell_\delta(q_{\mathrm{resp},t},Z_t)
  \right],
  \label{eq:joint-regression}
\end{equation}
where $\beta\ge0$ controls the strength of the response-informed auxiliary
objective. For a fixed target, we refer to the first and second unweighted
expected loss terms as $\Lroute$ and $\Lresp$, respectively. Because both terms back-propagate through
$z_\theta(s_t,m_t)$, the realized response can shape the state--model
representation ultimately used by RVE.
Section~\ref{sec:theory} analyzes this mechanism.

\paragraph{Return-grounded Monte Carlo warm start.}
Let $G_t=\sum_{k=t}^{T}\gamma^{k-t}r_k$ denote the realized discounted return
from turn $t$ onward, with $r_k$ as in Equation~\ref{eq:immediate-reward}. We
initialize both value estimators with
$\mathcal{L}_{\mathrm{MC}}=\mathcal{L}_{\mathrm{fit}}(G)$.
Completed trajectories thus provide direct return supervision before
bootstrapping is introduced.

\paragraph{Off-policy Bellman refinement.}
After the Monte Carlo warm start, we perform temporal-difference learning
with Double DQN~\citep{vanhasselt2016deep}. Let $\omega$ collect the online
RVE parameters, including $\theta$, and let $\bar{\omega}$ denote their
target-network copy. RVE selects
the continuation action and the target network evaluates it:
\begin{align}
  &m_{t+1}^{\star}
  =
  \arg\max_{m'\in\mathcal{M}}
  \Qroute(s_{t+1},m';\omega),
  \label{eq:td-action}\\
  &Y_t
  =
  r_t+\gamma(1-d_t)
  \Qroute(s_{t+1},m_{t+1}^{\star};\bar{\omega}).
  \label{eq:td-target}
\end{align}
RVE defines the continuation because it is the estimator used for routing at
deployment, while both RVE and RIVE regress to the same Bellman target through
$\mathcal{L}_{\mathrm{fit}}(Y)$.

Because each logged state contains only one executed action whereas RVE must
rank the full candidate set at deployment, we additionally apply conservative
Q-learning (CQL)~\citep{kumar2020conservative}:
\begin{equation}
  \mathcal{L}_{\mathrm{CQL}}
  =
  \E_t\!\left[
    \log\sum_{m\in\mathcal{M}}
    \exp\Qroute(s_t,m)
    -
    q_{\mathrm{route},t}
  \right],
  \label{eq:cql}
\end{equation}
which suppresses optimistic values for weakly supported actions. The
off-policy objective is therefore
\begin{equation}
  \mathcal{L}_{\mathrm{TD}}
  =
  \mathcal{L}_{\mathrm{fit}}(Y)
  +
  \alpha\mathcal{L}_{\mathrm{CQL}},
  \label{eq:td-loss}
\end{equation}
where $\alpha$ controls the strength of conservative regularization.

\paragraph{Replay-anchored online adaptation.}
Offline training consists of the Monte Carlo warm start followed by
off-policy Bellman refinement. We then collect a limited number of
trajectories under the learned router and continue optimizing
$\mathcal{L}_{\mathrm{TD}}$ on a balanced mixture of logged and newly
collected transitions. Logged replay anchors learning, while fresh trajectories expose the router
to states induced by its own decisions. Appendix~\ref{app:optimization-protocol} provides the
optimization and stopping details, including the default $\beta=1$.

\subsection{Why response-informed auxiliary learning helps}
\label{sec:theory}

We show that the realized response makes RIVE a cleaner auxiliary
value-prediction problem and that, through the shared representation
$z_\theta(s,m)$, this cleaner signal improves the estimation of the deployed
RVE. The analysis covers the joint regression component of the objective for
a fixed target; CQL separately regularizes action ranking. Assumptions and
complete proofs are in Appendix~\ref{app:theory}.

\paragraph{Response-resolved uncertainty.}
Fix a turn and let $X=(S_t,M_t)$ denote the pre-call information, $O=O_t$
the realized response, and $Y$ the scalar value target used in
Section~\ref{sec:response-learning}. Define
\[
  \Qroute^{\star}(X)=\E[Y\mid X],
  \qquad
  \Qresp^{\star}(X,O)=\E[Y\mid X,O].
\]
We decompose the return target as
\begin{equation}
  Y
  =
  \Qroute^{\star}(X)
  +
  \Delta(X,O)
  +
  \epsilon,
  \label{eq:decomp}
\end{equation}
where
\[
  \Delta(X,O)
  =
  \Qresp^{\star}(X,O)
  -
  \Qroute^{\star}(X),
  \qquad
  \epsilon
  =
  Y-\Qresp^{\star}(X,O).
\]
Here $\Delta$ captures return variation that becomes predictable after
observing the response, while $\epsilon$ denotes the remaining uncertainty.

\begin{proposition}[Response-resolved uncertainty]
\label{prop:var}
For every pre-call input $X$,
\begin{equation}
  \Var(Y\mid X)
  =
  \Var(\Delta\mid X)
  +
  \E[\Var(Y\mid X,O)\mid X].
  \label{eq:var}
\end{equation}
\end{proposition}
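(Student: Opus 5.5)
This is the law of total variance applied conditionally on $X$. We first fix the integrability setting: we assume $\E[Y^2]<\infty$, as in Appendix~\ref{app:theory}. Then $\Qroute^{\star}$ and $\Qresp^{\star}$ are well defined as $L^2$ conditional expectations, and every conditional variance in Equation~\ref{eq:var} is finite. We read all identities as holding almost surely in $X$; this is the precise meaning of ``for every pre-call input $X$'' once regular conditional distributions are fixed.

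\textbf{Step 1: center and split.} Using Equation~\ref{eq:decomp}, write
\[
Y-\Qroute^{\star}(X)=\Delta(X,O)+\epsilon .
\]
By definition, $\Var(Y\mid X)=\E[(\Delta+\epsilon)^2\mid X]$. Expanding the square gives three terms:
\[
\E[\Delta^2\mid X]+2\,\E[\Delta\epsilon\mid X]+\E[\epsilon^2\mid X].
\]

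\textbf{Step 2: kill the cross term.} Since $\Delta$ is $\sigma(X,O)$-measurable, the tower property gives
\[
\E[\Delta\epsilon\mid X]=\E\bigl[\Delta\,\E[\epsilon\mid X,O]\bigm| X\bigr].
\]
The inner factor vanishes because $\E[\epsilon\mid X,O]=\E[Y\mid X,O]-\Qresp^{\star}(X,O)=0$. Cauchy--Schwarz guarantees that $\Delta\epsilon$ is integrable, so this manipulation is justified.

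\textbf{Step 3: identify the remaining terms.} For the first term, the tower property gives $\E[\Delta\mid X]=\E[\E[Y\mid X,O]\mid X]-\Qroute^{\star}(X)=0$. Hence $\E[\Delta^2\mid X]=\Var(\Delta\mid X)$. For the last term, apply the tower property again:
\[
\E[\epsilon^2\mid X]=\E\bigl[\E[\epsilon^2\mid X,O]\bigm| X\bigr].
\]
Here $\E[\epsilon^2\mid X,O]=\Var(Y\mid X,O)$, since $\epsilon$ is $Y$ minus its $(X,O)$-conditional mean. Combining the three terms yields Equation~\ref{eq:var}.

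The only delicate point is measure-theoretic bookkeeping rather than algebra. Conditioning on the high-dimensional response $O$ requires the conditional expectations to exist, which $L^2$ integrability of $Y$ ensures. The identity then holds only almost surely in $X$, so we state it with that qualification.
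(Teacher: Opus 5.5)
Your proof is correct and follows the paper's argument step for step: write $Y-\Qroute^{\star}(X)=\Delta+\epsilon$, eliminate the cross term using $\E[\epsilon\mid X,O]=0$ and the tower property, use $\E[\Delta\mid X]=0$ to identify $\E[\Delta^{2}\mid X]$ with $\Var(\Delta\mid X)$, and rewrite $\E[\epsilon^{2}\mid X]$ as $\E[\Var(Y\mid X,O)\mid X]$. Your $L^{2}$-integrability assumption and almost-sure qualification are a slightly more careful version of the paper's appeal to bounded returns, and do not change the argument.
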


\paragraph{Interpretation.}
RVE predicts the return from $(s,m)$ alone and must treat both $\Delta$ and
$\epsilon$ as residual variation; RIVE observes the response and explains
$\Delta$, leaving only $\epsilon$. The response thus \key{converts a
specific component of return variation from latent noise into an observed
covariate}, making RIVE a cleaner auxiliary regression problem.

\paragraph{Benefit to the shared representation.}
We next ask whether this cleaner auxiliary problem improves the
representation used by RVE. Around the population optimum, let $J(X)$
denote the sensitivity of the pre-call prediction to the shared
representation parameters, and define
\[
  H=\E[J(X)J(X)^{\top}],\qquad
  \Sigma_\Delta=\E[J(X)J(X)^{\top}\Delta^{2}],\qquad
  \Sigma_\epsilon=\E[J(X)J(X)^{\top}\epsilon^{2}].
\]
Under the local shared-signal and regularity conditions stated in
Appendix~\ref{app:theory}, we obtain the following result.

\begin{theorem}[Response-informed auxiliary learning reduces representation estimation error]
\label{thm:aux}
Let $\hat\theta_\beta$ denote the shared-representation parameters learned
with $\Lroute+\beta\Lresp$. Then
\begin{equation}
  \operatorname{Cov}(\hat\theta_\beta)
  =
  \frac{1}{n}
  H^{-1}
  \left(
    \Sigma_\epsilon
    +
    \frac{\Sigma_\Delta}{(1+\beta)^{2}}
  \right)
  H^{-1}
  +
  o(n^{-1}).
  \label{eq:cov-beta}
\end{equation}
Relative to RVE-only training ($\beta=0$), response-informed auxiliary
learning therefore reduces the leading contribution of response-induced
variation to the representation-estimation error by the factor
$1/(1+\beta)^{2}$. Whenever $\Sigma_\Delta\neq0$, the improvement is strict
along directions containing response-resolved variation.
\end{theorem}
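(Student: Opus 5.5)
The argument is a standard M-estimation (sandwich) calculation. The only nontrivial input is the local shared-signal condition, which couples the two heads' sensitivities to $\theta$.

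\textbf{Step 1: local structure of the joint loss.} I would work with the squared-loss surrogate of the Huber loss; in the quadratic regime of $\ell_\delta$ the two coincide near the optimum. Under the local shared-signal condition, both heads move with $\theta$ through the same sensitivity $J(X)$ near $\theta^\star$:
\[
\partial_\theta \Qroute = J(X)+o(1), \qquad \partial_\theta \Qresp = J(X)+o(1).
\]
The response enters $\Qresp$ only through its offset $\Delta(X,O)$, which the RIVE head absorbs. At $\theta^\star$ the residuals are $Y-\Qroute^\star(X)=\Delta+\epsilon$ and $Y-\Qresp^\star(X,O)=\epsilon$. The per-sample score of $\Lroute+\beta\Lresp$ is therefore
\[
\psi=-J(X)\bigl[(\Delta+\epsilon)+\beta\epsilon\bigr]=-J(X)\bigl[(1+\beta)\epsilon+\Delta\bigr],
\]
and its expected Hessian is $(1+\beta)H$.

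\textbf{Step 2: mean zero and the score covariance.} First, $\E[\psi]=0$, since $\E[\Delta\mid X]=0$ and $\E[\epsilon\mid X,O]=0$ by the tower property; this is exactly the decomposition in \eqref{eq:decomp}. Next, the cross term vanishes:
\[
\E[J J^\top \Delta\epsilon]=\E\bigl[J J^\top \Delta\,\E[\epsilon\mid X,O]\bigr]=0,
\]
because $\Delta$ is $(X,O)$-measurable. This is the same orthogonality that underlies Proposition~\ref{prop:var}. Hence
\[
\E[\psi\psi^\top]=(1+\beta)^2\Sigma_\epsilon+\Sigma_\Delta.
\]

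\textbf{Step 3: sandwich formula.} Standard M-estimator asymptotics apply under the regularity conditions: consistency, an invertible $H$, and dominated second derivatives. They give
\[
\operatorname{Cov}(\hat\theta_\beta)=\tfrac1n\bigl[(1+\beta)H\bigr]^{-1}\bigl[(1+\beta)^2\Sigma_\epsilon+\Sigma_\Delta\bigr]\bigl[(1+\beta)H\bigr]^{-1}+o(n^{-1}),
\]
which simplifies to \eqref{eq:cov-beta}.

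\textbf{Step 4: comparison with $\beta=0$.} Subtracting the $\beta=0$ covariance from the general one gives the difference
\[
-\tfrac1n\bigl(1-(1+\beta)^{-2}\bigr)H^{-1}\Sigma_\Delta H^{-1}.
\]
This is negative semidefinite, and it is nonzero along $v$ exactly when $\Sigma_\Delta H^{-1}v\neq0$. That gives the strict improvement along directions carrying response-resolved variation.

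\textbf{Main obstacle.} The delicate step is Step 1: justifying that RIVE's head contributes the same Jacobian $J(X)$ in $\theta$, so that the two residuals combine linearly as $(1+\beta)\epsilon+\Delta$. In general the RIVE head's $\theta$-derivative depends on $O$. I would state the shared-signal assumption precisely as
\[
\partial_\theta \fresp\bigl(z_\theta,e(o)\bigr)=\partial_\theta \froute(z_\theta)
\]
at $\theta^\star$, for example when the response enters additively. I would also assume that the head parameters are either fixed at their population values or asymptotically orthogonal to $\theta$, so that the head-estimation error does not enter the $\theta$ block to leading order. The Huber-versus-square issue is handled by restricting to the quadratic regime, or by replacing $\epsilon$ and $\Delta$ with the corresponding clipped residuals.
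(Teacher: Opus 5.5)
Your proposal is correct and follows essentially the same route as the paper. Both use a squared-loss sandwich calculation with score $-J(X)[\Delta+(1+\beta)\epsilon]$, Hessian $(1+\beta)H$ and the vanishing cross term $\E[JJ^\top\Delta\epsilon]=0$; the paper also imposes exactly the two conditions you flag (a shared $\theta$-Jacobian and orthogonality of the RIVE head parameters) as Assumption~\ref{as:local}(ii)--(iii). The one detail you leave implicit is that the second-derivative terms in the Hessian drop out because $\E[\Delta+\epsilon\mid X]=0$ and $\E[\epsilon\mid X,O]=0$, which the paper states explicitly.
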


\paragraph{Interpretation.}
RIVE benefits the deployed router without changing its decision rule: it
supplies an additional estimating signal for the same representation, free of
$\Delta$ as residual noise, so joint learning \key{suppresses the effect of
this variation on representation estimation by the factor
$1/(1+\beta)^{2}$}, while the contribution of $\epsilon$ is unchanged.

\section{Experiments}
\label{sec:experiments}
\raggedbottom

Our experiments ask whether \ours{} delivers a better accuracy--cost
trade-off in multi-turn interaction and what produces it: in-distribution and
out-of-distribution (OOD) performance (Section~\ref{sec:main-results}), the
effect of the cost coefficient and online adaptation
(Section~\ref{sec:online-adaptation}), component ablations that isolate the
response input (Section~\ref{sec:response-mechanism}), and turn-level
routing interventions (Section~\ref{sec:routing-interventions}).

\subsection{Experimental setup}
\label{sec:experimental-setup}

\paragraph{Datasets.}
We train and evaluate on the MINT Reasoning subset~\citep{wang2024mint},
which draws from GSM8K, HotpotQA, MATH, MMLU, and TheoremQA, using fixed,
disjoint training, validation, and test manifests. For transfer beyond the
training distribution we test on MuSiQue~\citep{trivedi2022musique},
DROP~\citep{dua2019drop}, and the mathematics, physics, and chemistry subsets
of MMLU-Pro~\citep{wang2024mmlupro}. Appendix~\ref{app:data-collection}
details domains, sampling, and trajectory collection.

\paragraph{Candidate models and baselines.}
The candidate pool contains DeepSeek V4-Pro, DeepSeek V4-Flash, Qwen3.7 Plus,
Qwen3.5-9B, Meta Llama-3.3-70B-Instruct, and OpenAI GPT-4o-mini, spanning a
range of capability and price. We compare against (i) the strongest fixed
model, DeepSeek V4-Pro; (ii) the single-turn routers Weighted kNN
(provided by ORBIT, \citealp{lai2026orbit}),
RouterDC~\citep{chen2024routerdc}, EmbedLLM~\citep{zhuang2025embedllm}, and
Avengers-Pro~\citep{zhang2025avengerspro}, which score query--model
compatibility from pre-call information without modeling long-horizon
consequences; and (iii) the multi-turn routers Random Router and
MTRouter~\citep{zhang2026mtrouter}, our closest baseline, which selects
models per turn under a cost-aware objective.

\paragraph{Evaluation metrics and protocol.}
We report accuracy on MINT and MMLU-Pro and exact match on MuSiQue and DROP;
model-call cost is measured over the full evaluation set relative to
each table's reference policy. Hyperparameters and checkpoints are
selected on the validation split, and deployment uses RVE alone.
Appendix~\ref{app:experimental-details} records optimization and stopping
settings.

\subsection{Accuracy--cost performance and OOD generalization}
\label{sec:main-results}

Table~\ref{tab:main_results} reports the main comparison. On MINT, \ours{}
improves accuracy over MTRouter while reducing model-call cost by 74.7\%,
and over the strongest fixed model while reducing cost by 80.1\%: stronger
task performance does not require allocating expensive models uniformly
across the trajectory. \ours{} thus occupies a different accuracy--cost
regime from both fixed-model and single-turn baselines: \finding{it reaches
the highest accuracy among routers at a cost comparable to the cheapest of
them}, reflecting a better allocation of capability across interaction states
rather than a globally cheaper model.

\begin{table}[H]
\centering
\caption{In-distribution and OOD performance. MINT Test and MMLU-Pro report
accuracy; MuSiQue and DROP report exact match. Cost is total evaluation API
cost relative to DeepSeek V4-Pro (1.000) on the same dataset; annotations
give score changes in points and relative cost changes from it. Bold marks
the best router per column.}
\label{tab:main_results}
\renewcommand{\arraystretch}{1.3}
\setlength{\tabcolsep}{3.5pt}
\resizebox{\textwidth}{!}{%
\begin{tabular}{@{}l@{\tablevsep}l@{\tablevsep}
cc@{\tablevsep}cc@{\tablevsep}cc@{\tablevsep}cc@{}}
\toprule
\multicolumn{1}{l}{\textbf{Type}} & \multicolumn{1}{l}{\textbf{Method}} &
\multicolumn{2}{c}{\textbf{MINT Test}} &
\multicolumn{2}{c}{\textbf{MuSiQue}} &
\multicolumn{2}{c}{\textbf{DROP}} &
\multicolumn{2}{c}{\textbf{MMLU-Pro}} \\
\cmidrule(lr){3-4}\cmidrule(lr){5-6}\cmidrule(lr){7-8}\cmidrule(lr){9-10}
\multicolumn{1}{l}{} & &
\textbf{ACC $\uparrow$} & \textbf{Cost $\downarrow$} &
\textbf{EM $\uparrow$} & \textbf{Cost $\downarrow$} &
\textbf{EM $\uparrow$} & \textbf{Cost $\downarrow$} &
\textbf{ACC $\uparrow$} & \textbf{Cost $\downarrow$} \\
\midrule
\multicolumn{1}{l@{\tablevsep}}{\cellcolor{groupgray}Model} &
\multicolumn{1}{l@{\tablevsep}}{\cellcolor{groupgray}DeepSeek V4-Pro} &
\cellcolor{groupgray}84.40 &
\multicolumn{1}{c@{\tablevsep}}{\cellcolor{groupgray}1.000} &
\cellcolor{groupgray}55.56 &
\multicolumn{1}{c@{\tablevsep}}{\cellcolor{groupgray}1.000} &
\cellcolor{groupgray}71.88 &
\multicolumn{1}{c@{\tablevsep}}{\cellcolor{groupgray}1.000} &
\cellcolor{groupgray}90.56 & \cellcolor{groupgray}1.000 \\
\midrule
\multirow{4}{*}{Single-turn router}
& \multicolumn{1}{l@{\tablevsep}}{RouterDC} & \metricdown{79.43}{4.97} & \metricdown{0.334}{66.6\%} &
\metricdown{50.56}{5.00} & \metricdown{0.202}{79.8\%} &
\metricdown{70.00}{1.88} & \metricdown{0.224}{77.6\%} &
\metricdown{87.22}{3.34} & \metricdown{0.211}{78.9\%} \\
& \multicolumn{1}{l@{\tablevsep}}{EmbedLLM} & \metricdown{82.98}{1.42} & \metricdown{0.220}{78.0\%} &
\metricdown{50.00}{5.56} & \metricdown{0.588}{41.2\%} &
\metricdown{\best{71.25}}{0.63} & \metricdown{0.413}{58.7\%} &
\metricdown{89.94}{0.62} & \metricup{1.358}{35.8\%} \\
& \multicolumn{1}{l@{\tablevsep}}{Avengers-Pro} & \metricup{85.82}{1.42} &
\metricdown{\best{0.133}}{86.7\%} & \metricdown{49.44}{6.12} &
\metricdown{0.217}{78.3\%} & \metricdown{70.63}{1.25} &
\metricdown{0.188}{81.2\%} &
\metricdown{87.78}{2.78} & \metricdown{0.226}{77.4\%} \\
& \multicolumn{1}{l@{\tablevsep}}{Weighted kNN} & \metricup{85.82}{1.42} & \metricup{1.370}{37.0\%} &
\metricdown{52.20}{3.36} & \metricup{1.388}{38.8\%} &
\metricdown{\best{71.25}}{0.63} & \metricdown{0.997}{0.3\%} &
\metricdown{\best{90.00}}{0.56} & \metricup{1.646}{64.6\%} \\
\midrule
\multirow{4}{*}{Multi-turn router}
& \multicolumn{1}{l@{\tablevsep}}{Random Router} & \metricdown{78.72}{5.68} & \metricdown{0.404}{59.6\%} &
\metricdown{46.10}{9.46} & \metricdown{0.272}{72.8\%} &
\metricdown{68.75}{3.13} & \metricdown{0.226}{77.4\%} &
\metricdown{84.44}{6.12} & \metricdown{0.276}{72.4\%} \\
& \multicolumn{1}{l@{\tablevsep}}{MTRouter} & \metricup{85.11}{0.71} & \metricdown{0.788}{21.2\%} &
\metricdown{48.33}{7.23} & \metricup{1.259}{25.9\%} &
\metricdown{70.63}{1.25} & \metricdown{0.987}{1.3\%} &
\metricdown{\best{90.00}}{0.56} & \metricup{1.204}{20.4\%} \\
& \multicolumn{1}{l@{\tablevsep}}{\offlineicon\ours{} Offline}
& \metricup{85.82}{1.42}
& \metricdown{0.281}{71.9\%}
& \metricdown{\best{55.00}}{0.56}
& \metricdown{0.407}{59.3\%}
& \metricdown{70.63}{1.25}
& \metricdown{0.171}{82.9\%}
& \metricdown{87.22}{3.34}
& \metricdown{0.177}{82.3\%} \\
& \multicolumn{1}{l@{\tablevsep}}{\onlineicon\ours{} Online}
& \metricup{\best{87.94}}{3.54}
& \metricdown{0.199}{80.1\%}
& \metricdown{51.11}{4.45}
& \metricdown{\best{0.190}}{81.0\%}
& \metricdown{\best{71.25}}{0.63}
& \metricdown{\best{0.107}}{89.3\%}
& \metricdown{\best{90.00}}{0.56}
& \metricdown{\best{0.092}}{90.8\%} \\
\bottomrule
\end{tabular}%
}
\end{table}

The cost advantage persists under distribution shift. Across the three OOD
benchmarks, \ours{} Online has the lowest cost among routers and matches or
exceeds MTRouter's task scores; on MMLU-Pro it matches MTRouter's accuracy at
92.4\% lower cost. \finding{The learned cost-allocation behavior is thus not
specific to the MINT training distribution.} The one exception is MuSiQue,
where online adaptation settles on a cheaper policy and gives up some exact
match; the offline router remains the most accurate router there.
Appendix~\ref{app:cases-other-routers} analyzes representative cases.

\FloatBarrier
\subsection{Cost control and online adaptation}
\label{sec:online-adaptation}

\paragraph{Cost-sensitive routing through $\lambda$.}
Figure~\ref{fig:lambda-accuracy-cost} sweeps the cost coefficient. Removing
the penalty ($\lambda=0$) raises model-call cost to 3.2$\times$ the default
and also lowers accuracy, so \finding{the cost term does more than enforce a
budget}: without it the router overuses expensive models in states where
additional capability contributes little. The default $\lambda=0.05$ gives
the best operating point; increasing $\lambda$ shifts the policy toward
cheaper models, and at the most cost-sensitive setting cost falls by 39.2\%
relative to the default while accuracy returns to MTRouter's level at a
fraction of its cost.

\begin{figure}[t]
  \centering
  \begin{minipage}[t]{0.49\textwidth}
    \vspace{0pt}
    \centering
    \includegraphics[height=0.56\linewidth]{%
      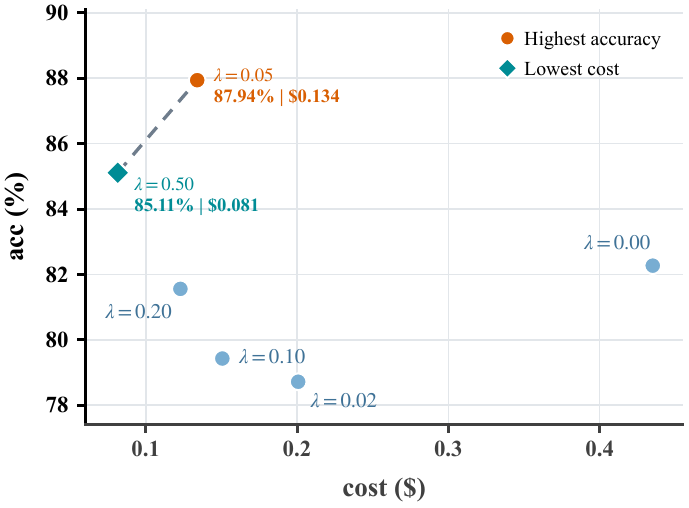}
    \caption{Cost-coefficient sensitivity on MINT Test-141. Highlighted settings
    achieve the highest accuracy ($\lambda=0.05$) and lowest cost
    ($\lambda=0.50$) among evaluated configurations. Cost denotes total
    test model-call cost in dollars.}
    \label{fig:lambda-accuracy-cost}
  \end{minipage}%
  \hfill
  \begin{minipage}[t]{0.49\textwidth}
    \vspace{0pt}
    \centering
    \includegraphics[height=0.56\linewidth]{%
      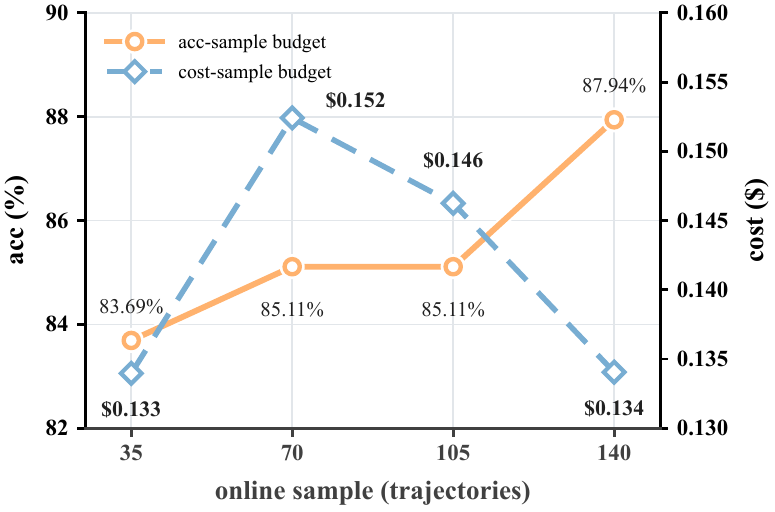}%
    \caption{Online adaptation on MINT Test-141. Accuracy (left axis) and
    total test model-call cost in dollars (right axis) after cumulatively
    adding 35, 70, 105, and 140 online trajectories in batches within a
    single training run.}
    \label{fig:online-budget-accuracy-cost}
  \end{minipage}
\end{figure}

\paragraph{Replay-anchored online adaptation.}
The offline router already exceeds MTRouter's MINT accuracy at lower cost
(Table~\ref{tab:main_results}), and limited online interaction refines it:
the full budget raises accuracy from 85.82\% to 87.94\% while reducing
test-time cost by 29.2\% relative to the offline router. \finding{The
improvement is thus not purchased through more expensive routing}; logged
trajectories supply most of the routing structure, and fresh trajectories
correct the mismatch between the offline behavior distribution and the
learned policy. Figure~\ref{fig:online-budget-accuracy-cost} tracks one run
as online trajectories are added in batches of 35: accuracy rises from
83.69\% after 35 to 87.94\% after 140, while final cost returns close to
its value at the first checkpoint.

\subsection{What drives the improvement?}
\label{sec:response-mechanism}

Table~\ref{tab:component_ablations} tests whether the gain comes from
response-informed auxiliary learning by removing RIVE, removing only its
response input, and removing CQL; all configurations deploy RVE alone.

\begin{table}[H]
\centering
\begin{threeparttable}
\caption{Component ablations on MINT Test-141. All configurations deploy RVE
alone; cost is relative to Ours.}
\label{tab:component_ablations}
\small
\renewcommand{\arraystretch}{1.25}
\setlength{\tabcolsep}{3.5pt}
\newcommand{\ablationcostplain}[1]{%
  \makebox[3.0em][r]{#1}\makebox[4.8em][l]{}}
\newcommand{\ablationcostup}[2]{%
  \makebox[3.0em][r]{#1}\makebox[4.8em][l]{\raisebox{-0.45ex}{%
  \hspace{0.15em}\scriptsize\textcolor{increasecolor}{%
  \ensuremath{\uparrow}\,#2}}}}
\begin{tabular*}{\textwidth}{@{\extracolsep{\fill}}l@{\tablevsep}ccc@{\tablevsep}cc@{}}
\toprule
\textbf{Configuration} & \textbf{RIVE} &
\shortstack{\textbf{Response}\\\textbf{input}} & \textbf{CQL} &
\textbf{ACC $\uparrow$} & \textbf{Cost $\downarrow$} \\
\midrule
\textbf{Ours} & $\checkmark$ & $\checkmark$ & $\checkmark$ &
\best{87.94\%} & \ablationcostplain{\best{1.000}} \\
\midrule
\hspace{1em}-\hspace{0.35em}w/o CQL & $\checkmark$ & $\checkmark$ &
\textcolor{red}{\ensuremath{\times}} & \metricdown{85.82\%}{2.12} &
\ablationcostup{2.163}{116.4\%} \\
\hspace{1em}-\hspace{0.35em}w/o RIVE & \textcolor{red}{\ensuremath{\times}} &
\textcolor{red}{\ensuremath{\times}} & $\checkmark$ &
\metricdown{80.14\%}{7.80} & \ablationcostup{1.172}{17.2\%} \\
\hspace{1em}-\hspace{0.35em}w/o response & $\checkmark$ &
\textcolor{red}{\ensuremath{\times}} & $\checkmark$ &
\metricdown{80.14\%}{7.80} & \ablationcostup{2.454}{145.4\%} \\
\bottomrule
\end{tabular*}
\begin{tablenotes}[flushleft]\footnotesize
\item Check: enabled component or access to the executed response; red
cross: ablated component or zeroed response input. w/o response keeps RIVE but
feeds a zero vector in place of the response. RIVE-enabled configurations use
$\beta=1$; removing RIVE sets $\beta=0$; other settings are fixed. Deltas
report ACC changes in points and cost changes from Ours (unrounded costs).
\end{tablenotes}
\end{threeparttable}
\end{table}

\paragraph{Role of the response-informed auxiliary estimator.}
Removing RIVE lowers accuracy by 7.80 points and raises cost, so its
training contributes to the deployed router. Keeping RIVE but zeroing its
response input produces exactly the same accuracy drop and a larger cost
increase. \finding{The improvement therefore does not come from adding a
second value head}: RIVE is useful because it observes the realized response.
This matches the zero-response case:
without the response, RIVE cannot explain $\Delta$, its residual equals
RVE's, and the joint objective supplies no additional estimating signal.
Appendix~\ref{app:cases-response-ablation} compares routing decisions of
\ours{} and this ablation on selected cases.

\paragraph{Role of conservative regularization.}
Removing CQL raises model-call cost by more than 100\% and lowers accuracy
by 2.12 points even with RIVE intact. The disproportionate cost increase
indicates that \finding{CQL mainly stabilizes how RVE ranks candidate actions
weakly supported by logged replay}, preventing optimistic values for
expensive models from dominating decisions.

\subsection{Why turn-level adaptation matters}
\label{sec:routing-interventions}

\begin{table}[t]
\centering
\small
\caption{Routing interventions on MINT evaluate retaining the first-turn model
in episodes where \ours{} originally switches and introducing a switch in
episodes where it originally uses one model.
Original: learned \ours{} policy.
Stay: retain the first-turn model throughout the episode.
Switch: select the second-highest RVE-scoring model at turn two, then resume
greedy routing. Cost is normalized to Original separately for each
intervention.}
\label{tab:full_vs_frozen}
\renewcommand{\arraystretch}{1.15}
\setlength{\tabcolsep}{5pt}
\begin{tabular*}{0.6\textwidth}{@{\extracolsep{\fill}}l@{\tablevsep}cc@{}}
\toprule
\textbf{Policy} & \textbf{ACC $\uparrow$} & \textbf{Cost $\downarrow$} \\
\midrule
Original & \best{84.75\%} & 1.000 \\
Stay & 72.88\% & \best{0.825} \\
\midrule
Original & 82.61\% & \best{1.000} \\
Switch & \best{84.78\%} & 1.429 \\
\bottomrule
\end{tabular*}
\end{table}
Table~\ref{tab:full_vs_frozen} examines both when to switch models and when
to stay. On the 59 MINT episodes in which \ours{} switches models, retaining
the first-turn model throughout lowers cost by 17.5\% but reduces successful
episodes from 50 to 43; since both policies start from the same model, the
difference isolates the later routing decisions. On 46 episodes in which
\ours{} keeps one model beyond the first turn, forcing the second-highest
RVE-scoring model at turn two (greedy routing thereafter) produces 78
switches instead of zero but only one additional success, 38 to 39, at
42.9\% higher cost.
\finding{Together, these results support switching models to improve task completion
and retaining the current model when the tested alternative offers limited
gains at higher cost.}

\flushbottom

\section{Conclusion}
\label{sec:conclusion}

\ours{} organizes multi-turn routing around when information becomes
available: RVE makes every deployment decision from the pre-call state, while
training-only RIVE conditions on the executed response and improves RVE only
through their shared representation. Conditioning on the response
removes response-induced return variation from the auxiliary regression
problem, and this cleaner signal reduces the leading asymptotic estimation
error of the deployed representation, using only responses and trajectories
already recorded during interaction and no additional inference-time path.
The current evaluation establishes a high-accuracy, lower-cost operating
point and an end-to-end difference between the full and RVE-only
configurations; cost-controlled response probes and matched transfer
ablations remain necessary for a narrower causal attribution to response
content.

\section*{Ethics Statement}
This work studies routing among existing language models in multi-turn
interaction, with the goal of reducing call cost while preserving task
performance. Cost-effective routing lowers the compute and monetary cost of
agentic systems and makes them accessible to a wider range of users. All
experiments use publicly available benchmarks and models and involve no
human subjects or personal data. Our method only selects which existing
model to call and does not generate new content or alter model behavior, so
it introduces no risks beyond those already present in the underlying
models. We are not aware of any foreseeable negative societal impact of this
work.

\bibliographystyle{iclr2027_conference}
\bibliography{references}

\appendix
\section{Theoretical Analysis: Proofs and Extensions}
\label{app:theory}

This appendix proves Proposition~\ref{prop:var} and Theorem~\ref{thm:aux}
and records three supporting facts that the main text invokes: the general
form of Theorem~\ref{thm:aux} without the shared-signal and centering
conditions (Appendix~\ref{app:theory-general}), the structure of the
response-resolved component $\Delta$ under an identity-blind environment
(Appendix~\ref{app:theory-structure}), the behavior of the decomposition under
bootstrapped targets (Appendix~\ref{app:theory-td}).

\paragraph{Setting.}
Fix a turn $t$ and a continuation policy $\pi$ that acts from turn $t+1$
onward. Write $\Ret_t=\sum_{j\ge t}\gamma^{j-t}r_j$ for the return from turn
$t$, with $r_j=-\lambda c_j/\costref+d_j y$. The regression target $Y$ is
$G_t$ in the Monte Carlo stage ($\pi=\mu$) and the bootstrapped target $Y_t$
of Equation~\ref{eq:td-target} in the temporal-difference stage, where the
local regression analysis holds the target network and selected continuation
actions fixed. We write
$X=(S_t,M_t)$, $O=O_t$, $\Qroute^{\star}(X)=\E[Y\mid X]$,
$\Qresp^{\star}(X,O)=\E[Y\mid X,O]$, related by the tower property
\begin{equation}
  \Qroute^{\star}(X)
  =\E_{O\mid X}\!\left[\Qresp^{\star}(X,O)\right],
  \label{eq:pre-post-relation}
\end{equation}
and use $\Delta$ and $\epsilon$ as in
Equation~\ref{eq:decomp}. Every statement below holds for any fixed $\pi$ and
any fixed target, so we omit the policy superscript. The superscript $\star$ on
$\Qroute$, $\Qresp$, and $\Ustar$ marks a population conditional quantity;
$m^{\star}$ and $V^{\star}$ denote a greedy action and its value. We use
$\sigfut$ as shorthand for $\Var(Y\mid X,O)$ when its arguments are clear.
Bounded returns ensure that all conditional expectations and variances exist.

\subsection{Proof of Proposition~\ref{prop:var}}
\label{app:proof-var}

\begin{proof}
By definition, $Y-\Qroute^{\star}(X)=\Delta(X,O)+\epsilon$. Since $\Delta$ is
measurable with respect to $(X,O)$ and $\E[\epsilon\mid X,O]=0$,
\begin{equation*}
  \E[\Delta\epsilon\mid X]
  =\E\!\left[\Delta\,\E[\epsilon\mid X,O]\,\middle|\,X\right]=0 .
\end{equation*}
Expanding the square and using $\E[\Delta\mid X]=0$,
\begin{equation*}
  \Var(Y\mid X)
  =\E\!\left[(\Delta+\epsilon)^{2}\mid X\right]
  =\Var(\Delta\mid X)+\E[\epsilon^{2}\mid X].
\end{equation*}
By the tower property,
$\E[\epsilon^{2}\mid X]=\E\!\left[\E[\epsilon^{2}\mid X,O]\mid X\right]
=\E\!\left[\Var(Y\mid X,O)\mid X\right]$, which is Equation~\ref{eq:var}.
The statements about the two estimators follow by reading the residual of
RVE about $\Qroute^{\star}$ as $\Delta+\epsilon$ and that of RIVE about
$\Qresp^{\star}$ as $\epsilon$. On a terminal turn,
$Y=G_t=-\lambda c(s,m,o)/\costref+y$; if the grader is a deterministic
function of $(s,o)$, then $Y$ is constant given $(X,O)$ and $\sigfut=0$.
\end{proof}

\subsection{Proof of Theorem~\ref{thm:aux}}
\label{app:proof-aux}

The proof relies on the following local model of the shared representation.

\begin{assumption}[Local shared-signal model]
\label{as:local}
(i) \emph{Correct specification.}
$\froute(z_{\theta^{\star}}(X))=\Qroute^{\star}(X)$ and
$\fresp(z_{\theta^{\star}}(X),e(O))=\Qresp^{\star}(X,O)$ at the population optimum
of the heads.
(ii) \emph{Shared pre-call signal.}
RIVE depends on $\theta$ only through the pre-call component:
$\nabla_\theta \fresp(z_{\theta^{\star}}(X),e(O))=J(X)$.
(iii) \emph{Centered response branch.}
The head of RIVE is parameterized so that its response-dependent part is
conditionally centered, $\E[\nabla_{\psi_{\mathrm{resp}}}\fresp\mid X]=0$ at the optimum.
(iv) $H\succ0$, and the standard regularity conditions for local
asymptotic analysis of M-estimators hold.
\end{assumption}

Condition (ii) says that $\Qresp=\Qroute+h$ locally, where the response branch $h$
absorbs $\Delta$; condition (iii) is the corresponding normalization
$\E[h\mid X]=0$ and makes the shared and response-specific parameters
asymptotically orthogonal. Appendix~\ref{app:theory-general} relaxes (ii)
and (iii).

\begin{proof}
Write the per-sample squared losses as
$\ell_{\mathrm{route}}(\theta)=\tfrac12\big(Y-\froute(z_\theta(X))\big)^{2}$ and
$\ell_{\mathrm{resp}}(\theta,\psi_{\mathrm{resp}})=\tfrac12\big(Y-\fresp(z_\theta(X),e(O);\psi_{\mathrm{resp}})\big)^{2}$,
and let $\widehat{\mathcal L}_\beta$ be the empirical average of
$\ell_{\mathrm{route}}+\beta\ell_{\mathrm{resp}}$ over $n$ i.i.d.\ samples. Under
Assumption~\ref{as:local}(i), the residuals at the population optimum are
\begin{equation*}
  Y-\froute(z_{\theta^{\star}}(X))=\Delta+\epsilon,
  \qquad
  Y-\fresp(z_{\theta^{\star}}(X),e(O);\psi_{\mathrm{resp}}^{\star})=\epsilon .
\end{equation*}

\emph{Scores.}
Using Assumption~\ref{as:local}(ii), the per-sample gradients with respect to
$\theta$ at the optimum are
\begin{equation*}
  g_{\mathrm{route}}=-J(X)(\Delta+\epsilon),
  \qquad
  g_{\mathrm{resp}}=-J(X)\,\epsilon,
  \qquad
  g_\beta=g_{\mathrm{route}}+\beta g_{\mathrm{resp}}=-J(X)\big[\Delta+(1+\beta)\epsilon\big].
\end{equation*}
Both scores have mean zero: $\E[J(X)\Delta]=\E[J(X)\E[\Delta\mid X]]=0$ and
$\E[J(X)\epsilon]=\E[J(X)\E[\epsilon\mid X,O]]=0$. The same conditioning
argument gives $\E[J(X)J(X)^{\top}\Delta\epsilon]=0$, hence
\begin{equation}
  \Var(g_\beta)
  =\E\!\left[J J^{\top}\big(\Delta+(1+\beta)\epsilon\big)^{2}\right]
  =\Sigma_\Delta+(1+\beta)^{2}\Sigma_\epsilon .
  \label{eq:score-var}
\end{equation}

\emph{Hessian.}
The population Hessian of $\E[\ell_{\mathrm{route}}]$ with respect to $\theta$ at
$\theta^{\star}$ is
$\E[JJ^{\top}]-\E[\nabla^{2}_\theta \froute\cdot(\Delta+\epsilon)]=H$, because
$\E[\Delta+\epsilon\mid X]=0$ removes the second term. The same argument with
residual $\epsilon$ gives $H$ for the $\theta$-block of $\E[\ell_{\mathrm{resp}}]$, so the
$\theta$-block of the joint Hessian is $H_\beta=(1+\beta)H$. The parameters
$\psi_{\mathrm{resp}}$ of the response head are estimated jointly with $\theta$, so the
joint Hessian also contains the cross block
$C=\E\!\left[J(X)\,\nabla_{\psi_{\mathrm{resp}}}\fresp^{\top}\right]$. Under
Assumption~\ref{as:local}(iii),
$C=\E\!\left[J(X)\,\E[\nabla_{\psi_{\mathrm{resp}}}\fresp\mid X]^{\top}\right]=0$, so the
joint Hessian is block diagonal in $(\theta,\psi_{\mathrm{resp}})$ and the $\theta$-block
can be analyzed on its own.

\emph{Sandwich.}
Under Assumption~\ref{as:local}(iv), standard local asymptotics for
M-estimators give
\begin{equation*}
  \operatorname{Cov}(\hat\theta_\beta)
  =\frac{1}{n}H_\beta^{-1}\Var(g_\beta)H_\beta^{-1}+o(n^{-1}).
\end{equation*}
Substituting $H_\beta=(1+\beta)H$ and Equation~\ref{eq:score-var},
\begin{align*}
  \operatorname{Cov}(\hat\theta_\beta)
  &=\frac{1}{n}\,\frac{1}{(1+\beta)^{2}}\,
    H^{-1}\!\left[\Sigma_\Delta+(1+\beta)^{2}\Sigma_\epsilon\right]\!H^{-1}
    +o(n^{-1})\\
  &=\frac{1}{n}H^{-1}\!\left[\Sigma_\epsilon
    +\frac{1}{(1+\beta)^{2}}\Sigma_\Delta\right]\!H^{-1}+o(n^{-1}),
\end{align*}
which is Equation~\ref{eq:cov-beta}. Setting $\beta=0$ gives the
RVE-only covariance, and the difference is
\begin{equation}
  \operatorname{Cov}(\hat\theta_0)-\operatorname{Cov}(\hat\theta_\beta)
  =\frac{1}{n}\!\left(1-\frac{1}{(1+\beta)^{2}}\right)\!
  H^{-1}\Sigma_\Delta H^{-1}+o(n^{-1})\;\succeq\;0,
  \label{eq:cov-gain}
\end{equation}
which is positive semidefinite because $\Sigma_\Delta\succeq0$, and is
positive definite on the range of $\Sigma_\Delta$.

\emph{Excess risk.}
Since $\nabla_\theta\Lroute(\theta^{\star})=\E[g_{\mathrm{route}}]=0$ and
$\nabla^{2}_\theta\Lroute(\theta^{\star})=H$, a second-order expansion
gives
\begin{equation*}
  \Lroute(\hat\theta_\beta)-\Lroute(\theta^{\star})
  =\tfrac12(\hat\theta_\beta-\theta^{\star})^{\top}H(\hat\theta_\beta-\theta^{\star})
  +o_p(n^{-1}).
\end{equation*}
Taking expectations and using
$\E[v^{\top}Hv]=\operatorname{Tr}[H\operatorname{Cov}(v)]$ with
Equation~\ref{eq:cov-beta} yields
\begin{equation}
  \E\!\left[\Lroute(\hat\theta_\beta)-\Lroute(\theta^{\star})\right]
  =\frac{1}{2n}\operatorname{Tr}\!\left[H^{-1}\!\left(
    \Sigma_\epsilon+\frac{\Sigma_\Delta}{(1+\beta)^{2}}
  \right)\right]+o(n^{-1}),
  \label{eq:excess-risk}
\end{equation}
which is nonincreasing in $\beta$ because $H^{-1}\Sigma_\Delta$ has nonnegative
trace.
\end{proof}

\subsection{General form without the shared-signal and centering conditions}
\label{app:theory-general}

Assumption~\ref{as:local}(ii)--(iii) idealize the implemented architecture,
in which RIVE has its own head and the two estimators share only the
state--model encoder. Dropping both conditions changes the constants but not
the direction of Theorem~\ref{thm:aux}. Let
$J_{\mathrm{route}}(X)=\nabla_\theta \froute$ and $J_{\mathrm{resp}}(X,O)=\nabla_\theta \fresp$ at the optimum,
and define $H_{\mathrm{route}}=\E[J_{\mathrm{route}}J_{\mathrm{route}}^{\top}]$, $H_{\mathrm{resp}}=\E[J_{\mathrm{resp}}J_{\mathrm{resp}}^{\top}]$,
$C=\E[J_{\mathrm{resp}}\nabla_{\psi_{\mathrm{resp}}}\fresp^{\top}]$, $H_{\psi\psi}=\E[\nabla_{\psi_{\mathrm{resp}}}\fresp\nabla_{\psi_{\mathrm{resp}}}\fresp^{\top}]$,
$\Sigma^{\mathrm{route}}_\Delta=\E[J_{\mathrm{route}}J_{\mathrm{route}}^{\top}\Delta^{2}]$,
$\Sigma^{\mathrm{route}}_\epsilon=\E[J_{\mathrm{route}}J_{\mathrm{route}}^{\top}\epsilon^{2}]$,
$\Sigma^{\mathrm{resp}}_\epsilon=\E[J_{\mathrm{resp}}J_{\mathrm{resp}}^{\top}\epsilon^{2}]$, and
$\Sigma^{\mathrm{route},\mathrm{resp}}_\epsilon=\E[J_{\mathrm{route}}J_{\mathrm{resp}}^{\top}\epsilon^{2}]$. The joint score with
respect to $\theta$ is $-J_{\mathrm{route}}(\Delta+\epsilon)-\beta J_{\mathrm{resp}}\epsilon$, and
profiling out $\psi_{\mathrm{resp}}$ replaces the RIVE information by its Schur
complement. The same sandwich argument then gives
\begin{equation}
\begin{aligned}
  \operatorname{Cov}(\hat\theta_\beta)
  &=\frac{1}{n}\,\tilde H_\beta^{-1}\!\left[
    \Sigma^{\mathrm{route}}_\Delta+\Sigma^{\mathrm{route}}_\epsilon
    +\beta\big(\Sigma^{\mathrm{route},\mathrm{resp}}_\epsilon+\Sigma^{\mathrm{route},\mathrm{resp}\,\top}_\epsilon\big)
    +\beta^{2}\Sigma^{\mathrm{resp}}_\epsilon
  \right]\!\tilde H_\beta^{-1}+o(n^{-1}),\\
  \tilde H_\beta
  &=H_{\mathrm{route}}+\beta\big(H_{\mathrm{resp}}-CH_{\psi\psi}^{-1}C^{\top}\big).
\end{aligned}
  \label{eq:cov-general}
\end{equation}
The term $\Sigma^{\mathrm{route}}_\Delta$ enters without any $\beta$-dependent factor in
the bracket and is divided by $\tilde H_\beta$ on both sides; since
$\tilde H_\beta\succeq H_{\mathrm{route}}$ with equality only when RIVE carries no
information about $\theta$, the response-induced contribution is attenuated by
a factor between $1/(1+\beta)^{2}$ and $1$, with the clean factor recovered
exactly when $J_{\mathrm{resp}}=J_{\mathrm{route}}$ and $C=0$. RIVE therefore never increases the
response-induced part of the estimation error, and the $\epsilon$-part is
reweighted rather than removed, matching the interpretation in the main
text.

\subsection{Structure of the response-resolved component}
\label{app:theory-structure}

Theorem~\ref{thm:aux} makes the gain of RIVE proportional to
$\Sigma_\Delta$. The following separability result, which uses the specific
structure of the routing environment, says what $\Delta$ is made of and hence
when $\Sigma_\Delta$ is large.
For this subsection, $Y=\Ret_t$ (and $G_t$ is its realized trajectory
sample); Appendix~\ref{app:theory-td} treats the bootstrapped target.

\begin{assumption}[Identity-blind environment]
\label{as:blind}
(i) Environment feedback, termination, and the task-success label depend on
the executed response but not on the identity of the model that produced it:
$P_{\mathrm{env}}(f_t\mid s_t,m_t,o_t)=P_{\mathrm{env}}(f_t\mid s_t,o_t)$ and
$y=y(s_T,o_T)$.
(ii) The call cost is a deterministic function of the call,
$c(s,m,o)=p^{\mathrm{in}}_m|s|+p^{\mathrm{out}}_m|o|$, where $p_m$ are the
per-token prices of model $m$.
(iii) The continuation policy reads only the next state $s_{t+1}$.
\end{assumption}

All three conditions hold in our setting: MINT tools and graders consume
response text only, API cost is price times token count, and the state
serialization of Section~\ref{sec:rve-rive-architecture} does not encode
earlier model identities.

\begin{proposition}[Post-call separability]
\label{prop:sep}
Under Assumption~\ref{as:blind}, for every $(s,m,o)$,
\begin{equation}
  \Qresp^{\star}(s,m,o)
  =\Ustar(s,o)-\lambda\,\frac{c(s,m,o)}{\costref},
  \quad\text{where}\quad
  \Ustar(s,o)\triangleq\E\!\left[
    d_t y+\gamma(1-d_t)V(S_{t+1})
    \,\middle|\,s,o
  \right],
  \label{eq:sep}
\end{equation}
with $V=V^{\pi}$ the value of the continuation policy
($V^{\star}(s')=\max_{m'}\Qroute^{\star}(s',m')$ under the greedy continuation).
Consequently,
\begin{equation}
  \Delta(s,m,o)
  =\Big(\Ustar(s,o)-\E_{o'\sim p_m(\cdot\mid s)}[\Ustar(s,o')]\Big)
  -\lambda\,\frac{c(s,m,o)-\E_{o'\sim p_m(\cdot\mid s)}[c(s,m,o')]}{\costref}.
  \label{eq:delta-structure}
\end{equation}
\end{proposition}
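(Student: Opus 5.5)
The plan is to compute $\Qresp^{\star}(s,m,o)=\E[\Ret_t\mid S_t=s,M_t=m,O_t=o]$ directly from the one-step structure of the return, $\Ret_t=r_t+\gamma\Ret_{t+1}$ with $r_t=-\lambda c_t/\costref+d_t y$. Then $\Delta$ follows from the tower relation of Equation~\ref{eq:pre-post-relation}.

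\textbf{Step 1: the cost term.} By Assumption~\ref{as:blind}(ii), $c_t=c(s,m,o)$ is deterministic given $(s,m,o)$. Its conditional expectation is therefore simply $-\lambda c(s,m,o)/\costref$, and this is the only place where $m$ enters explicitly.

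\textbf{Step 2: the remaining terms.} Write $\Ret_t+\lambda c_t/\costref=d_t y+\gamma(1-d_t)\Ret_{t+1}$, using the convention $\Ret_{t+1}=0$ after termination. Conditioning on $(s,m,o,f_t)$ makes $s_{t+1}=T(s,o,f_t)$ and $d_t$ determined. On termination, $y=y(s_T,o_T)$ depends only on the state and response.

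For the continuation, Assumption~\ref{as:blind}(iii) says the policy acting from $t+1$ reads only $s_{t+1}$. Combined with (i) applied at every later turn (feedback, termination and labels ignore model identity) and (ii) (later costs depend only on the later calls), the whole future process from $s_{t+1}$ is a Markov chain whose law does not depend on $m$. This gives $\E[\Ret_{t+1}\mid s,m,o,f_t]=V(s_{t+1})$.

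Finally, integrate over $f_t$ using $P_{\mathrm{env}}(f_t\mid s,m,o)=P_{\mathrm{env}}(f_t\mid s,o)$ from (i). The result is a function of $(s,o)$ alone, namely $\Ustar(s,o)$. This proves Equation~\ref{eq:sep}.

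In the greedy case, $V=V^{\star}$ with $V^{\star}(s')=\max_{m'}\Qroute^{\star}(s',m')$. The same argument applies because the greedy continuation also reads only $s_{t+1}$.

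\textbf{Step 3: the structure of $\Delta$.} Apply Equation~\ref{eq:pre-post-relation}. Since $O\mid X=(s,m)$ is distributed as $p_m(\cdot\mid s)$, we get
\[
\Qroute^{\star}(s,m)=\E_{o'\sim p_m(\cdot\mid s)}\!\left[\Ustar(s,o')\right]-\lambda\,\E_{o'\sim p_m(\cdot\mid s)}\!\left[c(s,m,o')\right]/\costref .
\]
Subtracting this from Equation~\ref{eq:sep} gives Equation~\ref{eq:delta-structure} by linearity. Boundedness of returns ensures that all expectations exist.

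\textbf{Main obstacle.} The delicate point is the claim in Step 2 that the future return given $(s,m,o,f_t)$ depends only on $s_{t+1}$. This needs a short induction over later turns, or a Markov-property argument on the augmented process, using (i)--(iii) at every step rather than only at turn $t$. It also requires the state to encode everything the future depends on; that is the role of the identity-free serialization. I would state it as a lemma: the conditional law of $(r_{t+1},r_{t+2},\dots)$ given the past depends only on $s_{t+1}$. I would prove it by backward induction on the horizon, first for finite $T$ and then by bounded convergence for $\gamma<1$.
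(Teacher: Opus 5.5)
Your proposal is correct and follows the paper's own route. The paper treats the cost as a constant given $(s,m,o)$ via (ii), uses (i) so that the turn-$t$ feedback, termination and label depend only on $(s,o)$, uses (iii) to replace $\E[\Ret_{t+1}\mid S_{t+1},S_t,M_t,O_t]$ by $V(S_{t+1})$, and subtracts the tower identity to obtain $\Delta$. The paper asserts that middle step in one line, so your Markov lemma over later turns only makes it explicit; and if $d_t$ is not determined by $(s,o,f_t)$, you should condition on $(f_t,d_t)$ jointly, as the paper does through the joint law of $(d_t,y,f_t)$.
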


\begin{proof}
Write the return recursively as
$\Ret_t=r_t+\gamma(1-d_t)\Ret_{t+1}$ and condition on $(S_t,M_t,O_t)=(s,m,o)$.
By Assumption~\ref{as:blind}(ii) the cost term $-\lambda c(s,m,o)/\costref$ is a
constant given the conditioning event. By Assumption~\ref{as:blind}(i), the joint
conditional law of $(d_t,y,f_t)$ given $(s,m,o)$ equals its law given $(s,o)$.
The next state is the deterministic function
$S_{t+1}=(q,\mathcal H_{<t}\oplus(o,f_t))$ of $(s,o,f_t)$, so its conditional
law also depends on $(s,o)$ only. By Assumption~\ref{as:blind}(iii) the continuation
policy reads $S_{t+1}$ alone, hence
$\E[\Ret_{t+1}\mid S_{t+1},S_t,M_t,O_t]=\E[\Ret_{t+1}\mid S_{t+1}]=V(S_{t+1})$.
Taking conditional expectations gives Equation~\ref{eq:sep}.
Equation~\ref{eq:delta-structure} follows by subtracting the tower identity
$\Qroute^{\star}(s,m)=\E_{o'\sim p_m(\cdot\mid s)}[\Qresp^{\star}(s,m,o')]$
(Equation~\ref{eq:pre-post-relation}).
\end{proof}

\begin{remark}[Where RIVE helps]
Equation~\ref{eq:delta-structure} splits $\Delta$ into a progress term, the
deviation of the response-conditioned progress $\Ustar(s,o)$ from its mean
over the model's response distribution, and a cost term, the deviation of the
realized token cost from its mean. With $\lambda=0.05$ and normalized costs
of order one, the cost term is small, so $\Sigma_\Delta$ is dominated by
states in which different responses of the same model lead to different
downstream progress. These are precisely the situations that make single-turn
outcome labels ambiguous: when a strong model fails on an unfavorable
response realization, RVE alone attributes the failure to the model
representation $z_\theta(s,m)$, whereas RIVE attributes it to the
response content and leaves the shared representation to encode the model's
typical behavior. When both a cheap and an expensive model succeed, $\Delta$
is nearly zero for both and the two estimators agree.
\end{remark}

\subsection{Bootstrapped targets}
\label{app:theory-td}

In the temporal-difference stage both estimators regress to
$Y_t=r_t+\gamma(1-d_t)\Qroute(s_{t+1},m^{\star}_{t+1};\bar\omega)$ with a fixed target
network. Conditional on $(s,m,o)$, the only remaining randomness in $Y_t$ is
the environment feedback $f_t$ that determines $s_{t+1}$, so
\begin{equation*}
  \Var(Y_t\mid s,m,o)=\gamma^{2}(1-d_t)^{2}\,
  \Var_{f_t\sim P_{\mathrm{env}}(\cdot\mid s,o)}\!\left[\Qroute(s_{t+1},m^{\star}_{t+1};\bar\omega)\right],
\end{equation*}
which vanishes on terminal turns and whenever the tool feedback is a
deterministic function of $(s,o)$. In this stage, therefore, $\sigfut$ is
small or zero, and almost all of the target variance $\Var(Y_t\mid s,m)$ is
response-resolved: $\Delta$ contains the variation of the bootstrapped
next-state value across the responses the model could have produced. The
decomposition of Proposition~\ref{prop:var} and the covariance reduction of
Theorem~\ref{thm:aux} apply verbatim with $Y=Y_t$, and the bootstrapped stage
is the regime in which the response-resolved share of the target variance,
and hence the predicted benefit of RIVE, is largest. The reduction
concerns the estimation of $\theta$ for a fixed target; it does not by itself
bound the error introduced by bootstrapping from an imperfect target network,
which is the usual caveat of fitted value iteration and is outside the scope
of the local analysis.

\section{Additional experimental details and results}
\label{app:experimental-details}
\raggedbottom

\subsection{Datasets, trajectory collection, and manifests}
\label{app:data-collection}

Table~\ref{tab:dataset-distribution} reports the domain and sampled-to-source
counts for each benchmark composition. We use the MINT Reasoning subset,
MuSiQue-Ans dev, DROP validation, and MMLU-Pro test splits. The three sampled
MMLU-Pro subjects contain 3,782 examples within the full 12,032-example test
split.

\begingroup
\setlength{\intextsep}{4pt}
\setlength{\abovecaptionskip}{4pt}
\begin{table}[H]
\centering
\caption{Dataset domains and sampling distributions. The final column gives
the sampled count and source-pool size within each composition.}
\label{tab:dataset-distribution}
\footnotesize
\renewcommand{\arraystretch}{1.0}
\begin{tabular}{@{}l@{\tablevsep}l@{\tablevsep}l@{\tablevsep}c@{}}
\toprule
\textbf{Dataset} & \textbf{Domain} & \textbf{Composition} &
\textbf{Sample / Source size} \\
\midrule
\multirow{5}{*}{MINT Test} &
\multirow{5}{*}{Multi-task reasoning} &
GSM8K & 21 / 48 \\
\arrayrulecolor{black!20}\cline{3-4}\arrayrulecolor{black}
& & HotpotQA & 19 / 43 \\
\arrayrulecolor{black!20}\cline{3-4}\arrayrulecolor{black}
& & MATH & 45 / 100 \\
\arrayrulecolor{black!20}\cline{3-4}\arrayrulecolor{black}
& & MMLU & 34 / 76 \\
\arrayrulecolor{black!20}\cline{3-4}\arrayrulecolor{black}
& & TheoremQA & 22 / 49 \\
\midrule
\multirow{3}{*}{MuSiQue} &
\multirow{3}{*}{Multi-hop QA} &
2-hop & 60 / 1,252 \\
\arrayrulecolor{black!20}\cline{3-4}\arrayrulecolor{black}
& & 3-hop & 60 / 760 \\
\arrayrulecolor{black!20}\cline{3-4}\arrayrulecolor{black}
& & 4-hop & 60 / 405 \\
\midrule
\multirow{4}{*}{DROP} &
\multirow{4}{*}{Discrete text reasoning} &
Date & 40 / 157 \\
\arrayrulecolor{black!20}\cline{3-4}\arrayrulecolor{black}
& & Number & 40 / 5,850 \\
\arrayrulecolor{black!20}\cline{3-4}\arrayrulecolor{black}
& & Single-span & 40 / 2,962 \\
\arrayrulecolor{black!20}\cline{3-4}\arrayrulecolor{black}
& & Multi-span & 40 / 567 \\
\midrule
\multirow{3}{*}{MMLU-Pro} &
\multirow{3}{*}{STEM reasoning} &
Mathematics & 60 / 1,351 \\
\arrayrulecolor{black!20}\cline{3-4}\arrayrulecolor{black}
& & Physics & 60 / 1,299 \\
\arrayrulecolor{black!20}\cline{3-4}\arrayrulecolor{black}
& & Chemistry & 60 / 1,132 \\
\bottomrule
\end{tabular}
\end{table}
\endgroup

The MINT manifests contain 200 training, 35 validation, and 141 test tasks,
with at most ten turns per episode. Six fixed-model policies, one per
candidate model, and a Random Router each execute the 200 training tasks,
yielding 1,400 offline trajectories and 7,492 transitions. Online adaptation
uses the 140 reasoning tasks within the same training manifest and collects
at most 140 additional trajectories. \ours{} uses random seed 42, and
the trace-level analyses in Section~\ref{sec:routing-interventions} and
Appendix~\ref{app:additional-routing-analyses} use the corresponding canonical
run.

\subsection{Optimization and stopping protocol}
\label{app:optimization-protocol}

Except where noted, \ours{} and its variants use $\gamma=0.99$,
$\lambda=0.05$, $\beta=1$, and $\alpha=0.1$. Offline
optimization runs for five Monte Carlo epochs followed by 20
temporal-difference epochs. During online adaptation, we sample offline and
newly collected replay in equal proportions and anneal episode-indexed
$\epsilon$ from $0.10$ to $0.02$. We validate every 35 trajectories. After
140 trajectories, early stopping uses patience two and caps adaptation at
three passes through the task pool.

The Huber threshold is $\delta=1$. AdamW uses batch size 64, weight decay
$0.01$, and gradient clipping at 5. Offline training uses learning rate
$3\times10^{-4}$; online adaptation uses $3\times10^{-5}$ and performs one
gradient update per newly added transition. We synchronize the target
network every 200 optimization steps.

For the online sample study, we add trajectories in batches of 35 within a
single adaptation run. Figure~\ref{fig:online-budget-accuracy-cost} reports
test accuracy and total test model-call cost at cumulative sample counts of
35, 70, 105, and 140 trajectories. Each point evaluates the corresponding
router checkpoint on the same MINT Test-141 set, tracing the progression of
one training run as online data accumulate. In the cost-coefficient study,
validation selects the $\lambda=0$ checkpoint after 70 trajectories.

\subsection{State, response, and model-card encodings}
\label{app:architecture-details}

A frozen Qwen3-Embedding-0.6B encoder maps the serialized state $s_t$ to
$h_t\in\mathbb{R}^{1024}$. The serialization retains the task, the two latest
complete turns, and earlier turns up to 16,384 tokens; it omits the identities
of models used on earlier turns. Each candidate model has a frozen text
embedding $e_m$, normalized structured fields $x_m$ for context length,
capabilities, and prices, and a missing-value mask. A trainable model-card
projector produces
\begin{equation}
  z_m=g_{\theta}(e_m,x_m)\in\mathbb{R}^{128},
  \label{eq:model-card}
\end{equation}
and a trainable state adapter fuses $h_t$ with $z_m$ into the shared
state--model representation $z_\theta(s_t,m)$ of
Equation~\ref{eq:shared-representation}. RVE computes $\Qroute(s_t,m)$
from $z_\theta(s_t,m)$ for every candidate. RIVE passes the executed
response $o_t$ through the frozen encoder and a trainable response projector,
then computes $\Qresp(s_t,m_t,o_t)$ from $z_\theta(s_t,m_t)$ and the response
representation for the selected model. The two value heads have separate
parameters; the adapter and the projector are shared.

\subsection{Robustness to candidate-model unavailability}
\label{app:model-pool-robustness}

Table~\ref{tab:model_pool_robustness} tests routing stability under two
changes to the candidate model pool. We remove DeepSeek V4-Pro to test the
loss of a strong, high-cost option and Qwen3.5-9B to test the loss of a
weaker, low-cost option. Without Qwen3.5-9B, \ours{} matches MTRouter's
85.82\% accuracy while reducing cost by 74.4\%. Without DeepSeek V4-Pro,
\ours{} achieves 83.69\% accuracy at 72.3\% lower cost than MTRouter, with a
0.71-point accuracy gap. Across both conditions, \ours{} preserves a favorable
accuracy--cost operating point after the candidate pool loses either model.

\begin{table}[H]
\centering
\caption{Routing stability under candidate-model unavailability on MINT
Test-141. Each model pool removes the indicated model and uses \ours{} Online
as the reference. Colored annotations show accuracy changes from \ours{} Online
in percentage points and relative cost changes from \ours{} Online, whose cost
is set to 1.000. Bold values denote the best router result in each model pool.}
\label{tab:model_pool_robustness}
\renewcommand{\arraystretch}{1.55}
\setlength{\tabcolsep}{3.5pt}
\newcommand{\poolmetricplain}[1]{%
  \makebox[3.0em][r]{#1}\makebox[4.8em][l]{}}
\newcommand{\poolmetricup}[2]{%
  \makebox[3.0em][r]{#1}\makebox[4.8em][l]{\raisebox{-0.45ex}{%
  \hspace{0.15em}\scriptsize\textcolor{increasecolor}{%
  \ensuremath{\uparrow}\,#2}}}}
\newcommand{\poolmetricdown}[2]{%
  \makebox[3.0em][r]{#1}\makebox[4.8em][l]{\raisebox{-0.45ex}{%
  \hspace{0.15em}\scriptsize\textcolor{decreasecolor}{%
  \ensuremath{\downarrow}\,#2}}}}
\definecolor{poolequalcolor}{HTML}{808080}
\newcommand{\poolmetricequal}[1]{%
  \makebox[3.0em][r]{#1}\makebox[4.8em][l]{\raisebox{-0.45ex}{%
  \hspace{0.15em}\scriptsize\textcolor{poolequalcolor}{%
  \ensuremath{-\,0\%}}}}}
\resizebox{\textwidth}{!}{%
\begin{tabular}{@{}l@{\tablevsep}l@{\tablevsep}l@{\tablevsep}cc@{}}
\toprule
\multicolumn{1}{l}{\textbf{Model pool}} &
\multicolumn{1}{l}{\textbf{Type}} &
\multicolumn{1}{l}{\textbf{Method}} &
\multicolumn{2}{c}{\textbf{MINT Test}} \\
\cmidrule(lr){4-5}
\multicolumn{1}{l}{} & \multicolumn{1}{l}{} & &
\textbf{ACC $\uparrow$} & \textbf{Cost $\downarrow$} \\
\midrule
\multirow{6}{*}{w/o Qwen3.5-9B}
& \multirow{4}{*}{Single-turn router}
& RouterDC & \poolmetricdown{80.85}{4.97} &
\poolmetricup{2.527}{152.7\%} \\
& & EmbedLLM & \poolmetricdown{79.43}{6.39} &
\poolmetricdown{\best{0.820}}{18.0\%} \\
& & Avengers-Pro & \poolmetricdown{84.40}{1.42} &
\poolmetricup{2.887}{188.7\%} \\
& & Weighted kNN & \poolmetricdown{84.40}{1.42} &
\poolmetricup{3.002}{200.2\%} \\
\cmidrule{2-5}
& \multirow{2}{*}{Multi-turn router}
& MTRouter & \poolmetricequal{\best{85.82}} &
\poolmetricup{3.906}{290.6\%} \\
& & \onlineicon\ours{} Online & \poolmetricplain{\best{85.82}} &
\poolmetricplain{1.000} \\
\midrule
\multirow{6}{*}{w/o DeepSeek V4-Pro}
& \multirow{4}{*}{Single-turn router}
& RouterDC & \poolmetricdown{80.14}{3.55} &
\poolmetricup{2.527}{152.7\%} \\
& & EmbedLLM & \poolmetricup{\best{84.40}}{0.71} &
\poolmetricup{1.029}{2.9\%} \\
& & Avengers-Pro & \poolmetricequal{83.69} &
\poolmetricup{4.000}{300.0\%} \\
& & Weighted kNN & \poolmetricdown{82.97}{0.72} &
\poolmetricup{3.413}{241.3\%} \\
\cmidrule{2-5}
& \multirow{2}{*}{Multi-turn router}
& MTRouter & \poolmetricup{\best{84.40}}{0.71} &
\poolmetricup{3.610}{261.0\%} \\
& & \onlineicon\ours{} Online & \poolmetricplain{83.69} &
\poolmetricplain{\best{1.000}} \\
\bottomrule
\end{tabular}
}
\end{table}

\subsection{Routing after execution errors}
\label{app:additional-routing-analyses}

Table~\ref{tab:error-event-taxonomy} defines the error classes used in the
post-error analysis. The classifier matches error names and response strings
recorded in the interaction traces.

\begin{table}[H]
\centering
\caption{Error-event categories used in the post-error routing analysis.}
\label{tab:error-event-taxonomy}
\footnotesize
\renewcommand{\arraystretch}{1.35}
\setlength{\tabcolsep}{3pt}
\begin{tabular*}{\textwidth}{@{\extracolsep{\fill}}l@{\tablevsep}>{\raggedright\arraybackslash}p{0.70\textwidth}@{}}
\toprule
\textbf{Category} & \textbf{Matched error signatures} \\
\midrule
Non-executable action &
UnsafeCodeError; Invalid action; Illegal action; Unknown action; Action not allowed \\
\midrule
Formatting error &
SyntaxError; JSONDecodeError; Parse Error; FormatError; Malformed action \\
\midrule
Non-network tool failure &
TypeError; NameError; AttributeError; local-tool TimeoutError messages \\
\bottomrule
\end{tabular*}
\end{table}

The unit of analysis is an execution-error event at turn $t$. Error-event rate
divides classified error events by all recorded turns, including execution,
solution, and intermediate turns. Switch rate divides events for which the
model at $t+1$ differs from that at $t$ by error events with an observable next
turn. We exclude final-turn errors from this denominator; none occur in these
traces. Average switch cost is the mean cost of the changed-model call at
$t+1$. Next-turn recovery requires that this call produces a valid execution or
solution action without another classified error.

Across pooled canonical traces from MINT, MuSiQue, DROP, and MMLU-Pro,
\ours{} records an error-event rate of 3.96\%, compared with 8.48\% for
MTRouter. Following an error event, it switches models in 26.3\% of cases,
compared with 4.2\% for MTRouter, and spends 23.9\% less on the subsequent
switched call. The changed-model call recovers on the next turn in 93.3\% and
100.0\% of cases, respectively. These traces show that \ours{} combines
fewer recorded execution errors with more frequent model changes after an
error. The freezing intervention in Section~\ref{sec:routing-interventions}
separately tests the effect of preventing model changes.

\begin{table}[H]
\centering
\caption{Post-error routing behavior pooled across MINT Test, MuSiQue, DROP,
and MMLU-Pro canonical traces. The MTRouter switch cost is 1.000.}
\label{tab:routing-after-errors}
\footnotesize
\renewcommand{\arraystretch}{1.45}
\setlength{\tabcolsep}{3pt}
\begin{tabular*}{\textwidth}{@{\extracolsep{\fill}}l@{\tablevsep}cccc@{}}
\toprule
\textbf{Method} & \textbf{Error-event rate} & \textbf{Switch rate} &
\textbf{Switch cost $\downarrow$} & \textbf{Next-turn recovery} \\
\midrule
\ours{} Online & \best{3.96\%} & \best{26.3\%} &
\best{0.761} & 93.3\% \\
MTRouter & 8.48\% & 4.2\% & 1.000 & \best{100.0\%} \\
\bottomrule
\end{tabular*}
\end{table}

\subsection{Stage-wise model-selection preferences}
\label{app:stage-wise-routing-preferences}

We further examine whether each router changes its model-selection preference
across the semantic progression of an interaction. Rather than dividing an
episode by absolute turn index, we classify each recorded turn as
\emph{Analyze}, \emph{Solve}, or \emph{Submit} from its action and response
content. Table~\ref{tab:turn-stage-taxonomy} gives the classification rules.
Explicit submission markers define Submit turns; among the remaining turns,
tool use or substantive task-processing content defines Solve turns, while
planning and pre-solution intermediate output define Analyze turns.

\begin{table}[H]
\centering
\caption{Turn-stage definitions used in the stage-wise routing analysis.}
\label{tab:turn-stage-taxonomy}
\footnotesize
\renewcommand{\arraystretch}{1.35}
\setlength{\tabcolsep}{3pt}
\begin{tabular*}{\textwidth}{@{\extracolsep{\fill}}p{0.20\textwidth}@{\hspace{10pt}}>{\raggedright\arraybackslash}p{\dimexpr0.80\textwidth-10pt\relax}@{}}
\toprule
\textbf{Stage} & \textbf{Classification rule} \\
\midrule
Analyze (outer ring) &
Planning; task interpretation; task decomposition; blank pre-solution output
(no substantive computation or result). \\
\midrule
Solve (middle ring) &
Tool calls; code execution; tool feedback; calculations; evidence;
intermediate conclusions; verification. \\
\midrule
Submit (inner ring) &
Solution action; submitted solution field; forced final answer; solution tag. \\
\bottomrule
\end{tabular*}
\end{table}

Figure~\ref{fig:stage-routing-preferences} shows the resulting conditional
model-selection distributions. The two \ours{} variants select Qwen3.5-9B
most often during Analyze and Solve turns, whereas DeepSeek V4-Flash becomes
their most frequent choice during Submit turns. This shift indicates that the
learned policies allocate models according to the semantic role of a turn,
rather than following one fixed model mixture throughout an episode.

\begin{figure}[H]
\centering
\includegraphics[width=\textwidth]{%
  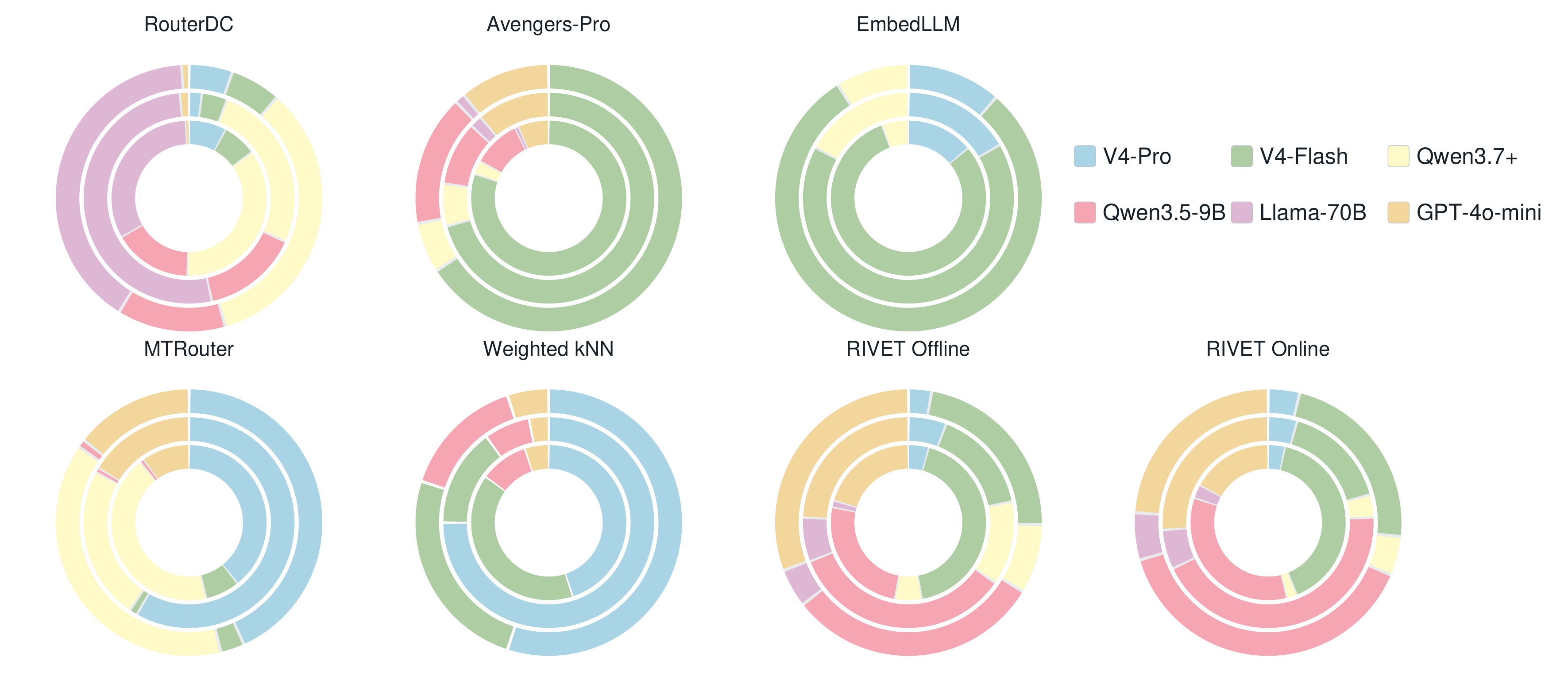}
\caption{Stage-wise model-selection preferences for seven routers on MINT
Test-141. Each panel
shows the distribution of selected models within Analyze (outer ring), Solve
(middle ring), and Submit (inner ring) turns; each ring is normalized
independently to 100\%. Segment colors denote candidate models.}
\label{fig:stage-routing-preferences}
\end{figure}

\begingroup
\raggedbottom
\section{Case studies}
\label{app:case-studies}
\definecolor{casebg}{HTML}{F1F4F9}
\definecolor{caseborder}{HTML}{CBD3DF}
\definecolor{ink}{HTML}{202B3D}
\definecolor{muted}{HTML}{697386}
\setlength{\parindent}{0pt}
\setlength{\parskip}{3pt}
\newcommand{\divider}{\par\vspace{2pt}{\color{caseborder}\hrule height .5pt}\vspace{4pt}}
\newcounter{caseexample}
\newtcolorbox{casebox}[1][3pt]{breakable, colback=casebg, colframe=caseborder, boxrule=.4pt, arc=0pt, boxsep=0pt, left=7pt, right=7pt, top=7pt, bottom=7pt, before upper={\small\setlength{\parskip}{#1}}}
\newcommand{\step}[2]{\textbf{#1}\newline #2}
\newcolumntype{L}[1]{>{\raggedright\arraybackslash}p{#1}}

{\small Across all cases, we condense tasks and actions, quote recorded responses and tool outputs verbatim, and report total episode costs in USD to six decimal places.\par}\vspace{8pt}
\subsection{Comparison with other routers}
\label{app:cases-other-routers}
The cases illustrate how \ours{} combines correct task completion with
lower episode cost. In Example~\ref{case:example-3}, \ours{} submits the
correct answer after two turns, whereas RouterDC obtains the correct
intermediate result but then introduces an unnecessary unit conversion and
returns an incorrect answer. In Example~\ref{case:example-2}, both \ours{}
and EmbedLLM answer correctly in two turns, but \ours{} uses DeepSeek
V4-Flash instead of DeepSeek V4-Pro, completing the task at lower cost.

The recovery trajectories also show how routers proceed after an execution
error. Table~\ref{tab:routing-after-errors} in
Appendix~\ref{app:additional-routing-analyses} reports post-error switch
rates of 26.3\% for \ours{} and 4.2\% for MTRouter.
In Example~\ref{case:example-6}, both routers initially select DeepSeek
V4-Pro and encounter the same disallowed-import error. From identical
histories, \ours{} switches to Qwen3.5-9B and answers correctly from the
supplied passages at turn two. MTRouter retains DeepSeek V4-Pro, corrects
the tool call, and continues searching until it submits the same answer at
turn eight. Both resolve the execution error on the next turn, but \ours{}
completes the task earlier at lower episode cost.

\begin{casebox}
\refstepcounter{caseexample}\label{case:example-1}
\textbf{Example \thecaseexample. Identifying the drummer of Maroon 5}\par
\textbf{Task.} Who is the current drummer of the band who did the song ``What Lovers Do''?
\divider\textbf{Turn-by-turn model choices and actions}\par
\begingroup\small\setlength{\tabcolsep}{5pt}\renewcommand{\arraystretch}{1.12}\arrayrulecolor{caseborder}
\begin{tabular}{@{}L{.055\linewidth}L{.44\linewidth}L{.44\linewidth}@{}}
\textbf{Turn} & \textbf{RIVET} & \textbf{MTRouter}\\\hline
1 & \step{OpenAI GPT-4o-mini}{Plans to identify the band.} & \step{OpenAI GPT-4o-mini}{Plans to identify the band.}\\[3pt]
2 & \step{OpenAI GPT-4o-mini}{Searches for the song.} & \step{OpenAI GPT-4o-mini}{Searches for the song.}\\[3pt]
3 & \step{OpenAI GPT-4o-mini}{Identifies Maroon 5 and plans a drummer search.} & \step{OpenAI GPT-4o-mini}{Identifies Maroon 5 and plans a drummer search.}\\[3pt]
4 & \step{OpenAI GPT-4o-mini}{Searches for the drummer; the result names Matt Flynn.} & \step{OpenAI GPT-4o-mini}{Searches for the drummer; the result names Matt Flynn.}\\[3pt]
5 & \step{OpenAI GPT-4o-mini}{States Matt Flynn and prepares to submit.} & \step{DeepSeek V4-Pro}{Performs an additional verification search.}\\[3pt]
6 & \step{OpenAI GPT-4o-mini}{Submits Matt Flynn.} & \step{OpenAI GPT-4o-mini}{Submits Matt Flynn.}\\[3pt]
\end{tabular}\endgroup
\divider\textbf{Selected responses and tool outputs}\par
\textbf{T4, RIVET / OpenAI GPT-4o-mini (tool output excerpt):}\par
{\small\raggedright drummer Matt Flynn\par}
\textbf{T5, RIVET / OpenAI GPT-4o-mini (response excerpt):}\par
{\small\raggedright The current drummer of Maroon 5 is Matt Flynn. I will now submit the final answer.\par}
\textbf{T5, MTRouter / DeepSeek V4-Pro (response excerpt):}\par
{\small\raggedright The Wikipedia summary states that Maroon 5's drummer is Matt Flynn. I will verify this with a more specific search to ensure there hasn't been a recent change.\par}
\divider\textbf{Final answers}\par
\textbf{RIVET:} Matt Flynn. \textbf{Correct.} 6 turns; Cost: \$0.000639.\par
\textbf{MTRouter:} Matt Flynn. \textbf{Correct.} 6 turns; Cost: \$0.002832.\par
\end{casebox}
{\footnotesize Task hotpotqa:33.\par}\vspace{10pt}
\begin{casebox}[1.5pt]
\refstepcounter{caseexample}\label{case:example-2}
\textbf{Example \thecaseexample. The final three digits of a power}\par
\textbf{Task.} What is the sum of the final three digits of the integer representation of $5^{100}$?
\divider\textbf{Turn-by-turn model choices and actions}\par
\begingroup\small\setlength{\tabcolsep}{5pt}\renewcommand{\arraystretch}{1.12}\arrayrulecolor{caseborder}
\begin{tabular}{@{}L{.055\linewidth}L{.44\linewidth}L{.44\linewidth}@{}}
\textbf{Turn} & \textbf{RIVET} & \textbf{EmbedLLM}\\\hline
1 & \step{DeepSeek V4-Flash}{Computes the last three digits (625) and their sum (13).} & \step{DeepSeek V4-Pro}{Computes the last three digits (625) and their sum (13).}\\[3pt]
2 & \step{DeepSeek V4-Flash}{Submits 13.} & \step{DeepSeek V4-Pro}{Submits 13.}\\[3pt]
\end{tabular}\endgroup
\divider\textbf{Selected responses and tool outputs}\par
\textbf{T1, RIVET / DeepSeek V4-Flash (response excerpt):}\par
{\small\raggedright print(sum(int(d) for d in str(pow(5, 100, 1000))))\par}
\textbf{T1, EmbedLLM / DeepSeek V4-Pro (response excerpt):}\par
{\small\raggedright mod = pow(5, 100, 1000)\par}
\textbf{T2, RIVET / DeepSeek V4-Flash (response excerpt):}\par
{\small\raggedright \textless{}solution\textgreater{}13\textless{}/solution\textgreater{}\par}
\textbf{T2, EmbedLLM / DeepSeek V4-Pro (response excerpt):}\par
{\small\raggedright \textless{}solution\textgreater{}13\textless{}/solution\textgreater{}\par}
\divider\textbf{Final answers}\par
\textbf{RIVET:} 13. \textbf{Correct.} 2 turns; Cost: \$0.000113.\par
\textbf{EmbedLLM:} 13. \textbf{Correct.} 2 turns; Cost: \$0.002088.\par
\end{casebox}
{\footnotesize Task math:73.\par}\vspace{10pt}
\begin{casebox}[1.5pt]
\refstepcounter{caseexample}\label{case:example-3}
\textbf{Example \thecaseexample. Buying snowshoes for six sled dogs}\par
\textbf{Task.} Mario has 6 sled dogs, each with four legs. A pair of snowshoes costs \$12. How much does it cost to buy snowshoes for all his dogs? Return an integer.
\divider\textbf{Turn-by-turn model choices and actions}\par
\begingroup\small\setlength{\tabcolsep}{5pt}\renewcommand{\arraystretch}{1.12}\arrayrulecolor{caseborder}
\begin{tabular}{@{}L{.055\linewidth}L{.44\linewidth}L{.44\linewidth}@{}}
\textbf{Turn} & \textbf{RIVET} & \textbf{RouterDC}\\\hline
1 & \step{Qwen3.5-9B}{Computes 12 pairs at 12 dollars per pair; tool output: 144.} & \step{Meta Llama-3.3-70B-Instruct}{Plans to calculate the required number of pairs.}\\[3pt]
2 & \step{DeepSeek V4-Flash}{Submits 144.} & \step{Meta Llama-3.3-70B-Instruct}{Computes the total; tool output: 144.0.}\\[3pt]
3 & \textcolor{muted}{Episode ended} & \step{Meta Llama-3.3-70B-Instruct}{Converts dollars to cents; tool output: 14400.}\\[3pt]
4 & \textcolor{muted}{Episode ended} & \step{Meta Llama-3.3-70B-Instruct}{Submits 14400.}\\[3pt]
\end{tabular}\endgroup
\divider\textbf{Selected responses and tool outputs}\par
\textbf{T1, RIVET / Qwen3.5-9B (response excerpt):}\par
{\small\raggedright total\_cost = num\_pairs * cost\_per\_pair\par}
\textbf{T3, RouterDC / Meta Llama-3.3-70B-Instruct (response excerpt):}\par
{\small\raggedright the output should be in cents to avoid losing precision due to the decimal point.\par}
\textbf{T3, RouterDC / Meta Llama-3.3-70B-Instruct (response excerpt):}\par
{\small\raggedright total\_cost\_cents = int(total\_cost\_dollars * 100)\par}
\textbf{T2, RIVET / DeepSeek V4-Flash (response excerpt):}\par
{\small\raggedright \textless{}solution\textgreater{}144\textless{}/solution\textgreater{}\par}
\divider\textbf{Final answers}\par
\textbf{RIVET:} 144. \textbf{Correct.} 2 turns; Cost: \$0.000179.\par
\textbf{RouterDC:} 14400. \textbf{Incorrect.} 4 turns; Cost: \$0.000403.\par
\end{casebox}
{\footnotesize Task gsm8k:17.\par}\vspace{10pt}
\begin{casebox}
\refstepcounter{caseexample}\label{case:example-4}
\textbf{Example \thecaseexample. Area defined by floor functions}\par
\textbf{Task.} What is the area of the region in the $xy$-plane satisfying $\lfloor x\rfloor\lfloor y\rfloor=16$?
\divider\textbf{Turn-by-turn model choices and actions}\par
\begingroup\small\setlength{\tabcolsep}{5pt}\renewcommand{\arraystretch}{1.12}\arrayrulecolor{caseborder}
\begin{tabular}{@{}L{.055\linewidth}L{.44\linewidth}L{.44\linewidth}@{}}
\textbf{Turn} & \textbf{RIVET} & \textbf{RouterDC}\\\hline
1 & \step{DeepSeek V4-Flash}{Submits 10.} & \step{Meta Llama-3.3-70B-Instruct}{Lists ten unit squares, then writes code that counts each positive ordered pair four times; makes no tool call.}\\[3pt]
2 & \textcolor{muted}{Episode ended} & \step{Meta Llama-3.3-70B-Instruct}{Revises the code to count 8, 8, and 4 squares for the three positive unordered pairs; makes no tool call.}\\[3pt]
3 & \textcolor{muted}{Episode ended} & \step{Meta Llama-3.3-70B-Instruct}{Repeats the code in a Markdown block; makes no tool call.}\\[3pt]
4 & \textcolor{muted}{Episode ended} & \step{Meta Llama-3.3-70B-Instruct}{Claims a code output of 24 and submits 24; the log contains no tool execution.}\\[3pt]
\end{tabular}\endgroup
\divider\textbf{Selected responses and tool outputs}\par
\textbf{T1, RIVET / DeepSeek V4-Flash (response excerpt):}\par
{\small\raggedright \textless{}solution\textgreater{}10\textless{}/solution\textgreater{}\par}
\textbf{T1, RouterDC / Meta Llama-3.3-70B-Instruct (response excerpt):}\par
{\small\raggedright The area of each unit square is 1. Since we have 10 such pairs (considering both positive and negative factors), we might initially think to calculate the area based on these pairs.\par}
\textbf{T4, RouterDC / Meta Llama-3.3-70B-Instruct (response excerpt):}\par
{\small\raggedright We've executed the code and now have the total area.\par}
\divider\textbf{Final answers}\par
\textbf{RIVET:} 10. \textbf{Correct.} 1 turn; Cost: \$0.000126.\par
\textbf{RouterDC:} 24. \textbf{Incorrect.} 4 turns; Cost: \$0.001318.\par
\end{casebox}
{\footnotesize Task math:0.\par}\vspace{10pt}
\begin{casebox}
\refstepcounter{caseexample}\label{case:example-5}
\textbf{Example \thecaseexample. Expected number of die rolls}\par
\textbf{Task.} Each morning Bob rolls a fair six-sided die, rerolling whenever he gets 1 and stopping on any other number. What is the expected total number of rolls in a non-leap year?
\divider\textbf{Turn-by-turn model choices and actions}\par
\begingroup\small\setlength{\tabcolsep}{5pt}\renewcommand{\arraystretch}{1.12}\arrayrulecolor{caseborder}
\begin{tabular}{@{}L{.055\linewidth}L{.44\linewidth}L{.44\linewidth}@{}}
\textbf{Turn} & \textbf{RIVET} & \textbf{Avengers-Pro}\\\hline
1 & \step{Qwen3.5-9B}{Computes the expected annual total; tool output: 438.0.} & \step{Qwen3.5-9B}{Computes the expected annual total; tool output: 438.0.}\\[3pt]
2 & \step{OpenAI GPT-4o-mini}{Switches to OpenAI GPT-4o-mini and submits 438.} & \step{Qwen3.5-9B}{Keeps Qwen3.5-9B and submits 438 with a derivation.}\\[3pt]
\end{tabular}\endgroup
\divider\textbf{Selected responses and tool outputs}\par
\textbf{T1, RIVET / Qwen3.5-9B (tool output excerpt):}\par
{\small\raggedright Expected rolls per day: 1.2\newline Total expected rolls in 365 days: 438.0\par}
\textbf{T1, Avengers-Pro / Qwen3.5-9B (tool output excerpt):}\par
{\small\raggedright Expected rolls per day: 1.2\newline Total expected rolls in 365 days: 438.0\par}
\textbf{T2, RIVET / OpenAI GPT-4o-mini (response excerpt):}\par
{\small\raggedright \textless{}solution\textgreater{}438\textless{}/solution\textgreater{}\par}
\textbf{T2, Avengers-Pro / Qwen3.5-9B (response excerpt):}\par
{\small\raggedright Expected rolls per day = 1/p = 6/5 = 1.2\par}
\divider\textbf{Final answers}\par
\textbf{RIVET:} 438. \textbf{Correct.} 2 turns; Cost: \$0.000283.\par
\textbf{Avengers-Pro:} D) 438. \textbf{Correct.} 2 turns; Cost: \$0.000437.\par
\end{casebox}
{\footnotesize Task mmlu:22.\par}\vspace{10pt}
\begin{casebox}
\refstepcounter{caseexample}\label{case:example-6}
\textbf{Example \thecaseexample. Answering after an invalid tool invocation}\par
\textbf{Task.} What is the average yearly income for a person of the same nationality as the creator of MacGruder and Loud?
\divider\textbf{Turn-by-turn model choices and actions}\par
\begingroup\small\setlength{\tabcolsep}{5pt}\renewcommand{\arraystretch}{1.12}\arrayrulecolor{caseborder}
\begin{tabular}{@{}L{.055\linewidth}L{.44\linewidth}L{.44\linewidth}@{}}
\textbf{Turn} & \textbf{RIVET} & \textbf{MTRouter}\\\hline
1 & \step{DeepSeek V4-Pro}{Imports wikipedia before calling the search function; the tool rejects the import.} & \step{DeepSeek V4-Pro}{Imports wikipedia before calling the search function; the tool rejects the import.}\\[3pt]
2 & \step{Qwen3.5-9B}{Switches to Qwen3.5-9B and submits the answer from the supplied paragraphs.} & \step{DeepSeek V4-Pro}{Removes the import and searches for MacGruder and Loud.}\\[3pt]
3 & \textcolor{muted}{Episode ended} & \step{DeepSeek V4-Pro}{Searches for the creator of MacGruder and Loud.}\\[3pt]
4 & \textcolor{muted}{Episode ended} & \step{DeepSeek V4-Pro}{Searches again for the television series creator.}\\[3pt]
5 & \textcolor{muted}{Episode ended} & \step{DeepSeek V4-Pro}{Searches for the television series.}\\[3pt]
6 & \textcolor{muted}{Episode ended} & \step{DeepSeek V4-Pro}{Searches for the cast.}\\[3pt]
7 & \textcolor{muted}{Episode ended} & \step{DeepSeek V4-Pro}{Searches for MacGruder and Loud and Aaron Spelling.}\\[3pt]
8 & \textcolor{muted}{Episode ended} & \step{DeepSeek V4-Pro}{Submits the answer with supporting paragraphs.}\\[3pt]
\end{tabular}\endgroup
\divider\textbf{Selected responses and tool outputs}\par
\textbf{T1, RIVET / DeepSeek V4-Pro (response excerpt):}\par
{\small\raggedright import wikipedia\newline print(wikipedia\_search("MacGruder and Loud"))\par}
\textbf{T1, RIVET / DeepSeek V4-Pro (tool output excerpt):}\par
{\small\raggedright UnsafeCodeError: Import is not allowed: wikipedia\par}
\textbf{T2, RIVET / Qwen3.5-9B (response excerpt):}\par
{\small\raggedright Answer: \$72,641\newline Support: 4, 8, 15\par}
\textbf{T2, MTRouter / DeepSeek V4-Pro (response excerpt):}\par
{\small\raggedright print(wikipedia\_search("MacGruder and Loud"))\par}
\textbf{T8, MTRouter / DeepSeek V4-Pro (response excerpt):}\par
{\small\raggedright Answer: \$72,641\newline Support: 15,8,4\par}
\divider\textbf{Final answers}\par
\textbf{RIVET:} \$72,641. \textbf{Correct.} 2 turns; Cost: \$0.003489.\par
\textbf{MTRouter:} \$72,641. \textbf{Correct.} 8 turns; Cost: \$0.015209.\par
\end{casebox}
{\footnotesize Task musique\_ans:3hop1\_\_694534\_160088\_73916.\par}\vspace{10pt}
\subsection{Comparison with response-ablated training}
\label{app:cases-response-ablation}
These cases illustrate how response-informed training affects accuracy and
cost through the deployed routing policy. In
Example~\ref{case:example-9}, both variants answer correctly in one turn, but
\ours{} selects DeepSeek V4-Flash and RIVE w/o response selects DeepSeek
V4-Pro. In this case, \ours{} incurs 95.9\% lower episode cost than RIVE w/o
response.

Example~\ref{case:example-10} provides the clearest accuracy contrast. \ours{}
selects Qwen3.5-9B and returns the correct acceleration, $-120$, in one turn.
RIVE w/o response selects OpenAI GPT-4o-mini, repeats its differentiation plan
through turn nine, and submits $-240$ at turn ten. The ablated variant is
incorrect and costs 15.6$\times$ as much. Together, these cases illustrate how
response-informed training guides model allocation across both accuracy and
cost.

\begin{casebox}
\refstepcounter{caseexample}\label{case:example-7}
\textbf{Example \thecaseexample. Sum of proper divisors}\par
\textbf{Task.} The sum of the proper divisors of 18 is 21. What is the sum of the proper divisors of 198?
\divider\textbf{Turn-by-turn model choices and actions}\par
\begingroup\small\setlength{\tabcolsep}{5pt}\renewcommand{\arraystretch}{1.12}\arrayrulecolor{caseborder}
\begin{tabular}{@{}L{.055\linewidth}L{.44\linewidth}L{.44\linewidth}@{}}
\textbf{Turn} & \textbf{RIVET} & \textbf{RIVE w/o response}\\\hline
1 & \step{DeepSeek V4-Flash}{Runs a divisor-sum function; tool output: 270.} & \step{DeepSeek V4-Flash}{Runs the identical function; tool output: 270.}\\[3pt]
2 & \step{Qwen3.5-9B}{Reruns the function; tool output: 270.} & \step{Qwen3.5-9B}{Reruns the identical function; tool output: 270.}\\[3pt]
3 & \step{DeepSeek V4-Flash}{Switches to DeepSeek V4-Flash and submits 270.} & \step{Qwen3.5-9B}{Keeps Qwen3.5-9B and recomputes; output: 270.}\\[3pt]
4 & \textcolor{muted}{Episode ended} & \step{Qwen3.5-9B}{Repeats the computation; output: 270.}\\[3pt]
5 & \textcolor{muted}{Episode ended} & \step{Qwen3.5-9B}{Changes the function and subtracts 198; output: 72.}\\[3pt]
6 & \textcolor{muted}{Episode ended} & \step{Qwen3.5-9B}{Restores the earlier function; output: 270.}\\[3pt]
7 & \textcolor{muted}{Episode ended} & \step{Qwen3.5-9B}{Explicitly sums the proper divisors; output: 270.}\\[3pt]
8 & \textcolor{muted}{Episode ended} & \step{Qwen3.5-9B}{Includes verification code and submits 270.}\\[3pt]
\end{tabular}\endgroup
\divider\textbf{Selected responses and tool outputs}\par
\textbf{T1, RIVET / DeepSeek V4-Flash (response excerpt):}\par
{\small\raggedright for i in range(1, n):\newline if n \% i == 0:\newline total += i\par}
\textbf{T3, RIVET / DeepSeek V4-Flash (response excerpt):}\par
{\small\raggedright \textless{}solution\textgreater{}270\textless{}/solution\textgreater{}\par}
\textbf{T5, RIVE w/o response / Qwen3.5-9B (response excerpt):}\par
{\small\raggedright return total - n  \# subtract n itself to get proper divisors only\par}
\textbf{T5, RIVE w/o response / Qwen3.5-9B (tool output excerpt):}\par
{\small\raggedright 72\newline None\par}
\divider\textbf{Final answers}\par
\textbf{RIVET:} 270. \textbf{Correct.} 3 turns; Cost: \$0.000230.\par
\textbf{RIVE w/o response:} 270. \textbf{Correct.} 8 turns; Cost: \$0.001073.\par
\end{casebox}
{\footnotesize Task math:84.\par}\vspace{10pt}
\begin{casebox}
\refstepcounter{caseexample}\label{case:example-8}
\textbf{Example \thecaseexample. Evaluating a defined operation}\par
\textbf{Task.} Let $a\star b=a^b-ab$. If $2\star x=22$, find $x$. Options: A: 5; B: 22; C: 11; D: 6.
\divider\textbf{Turn-by-turn model choices and actions}\par
\begingroup\small\setlength{\tabcolsep}{5pt}\renewcommand{\arraystretch}{1.12}\arrayrulecolor{caseborder}
\begin{tabular}{@{}L{.055\linewidth}L{.44\linewidth}L{.44\linewidth}@{}}
\textbf{Turn} & \textbf{RIVET} & \textbf{RIVE w/o response}\\\hline
1 & \step{OpenAI GPT-4o-mini}{Rewrites the equation and plans to test the options.} & \step{OpenAI GPT-4o-mini}{Produces the identical equation and plan.}\\[3pt]
2 & \step{Qwen3.5-9B}{Switches to Qwen3.5-9B and evaluates each option.} & \step{OpenAI GPT-4o-mini}{Keeps OpenAI GPT-4o-mini and evaluates each option.}\\[3pt]
3 & \step{DeepSeek V4-Flash}{Selects DeepSeek V4-Flash and submits 5.} & \step{OpenAI GPT-4o-mini}{Keeps OpenAI GPT-4o-mini and submits 5.}\\[3pt]
\end{tabular}\endgroup
\divider\textbf{Selected responses and tool outputs}\par
\textbf{T1, RIVET / OpenAI GPT-4o-mini (response excerpt):}\par
{\small\raggedright Next, I will evaluate this equation for the given options: A) 5, B) 22, C) 11, D) 6.\par}
\textbf{T2, RIVET / Qwen3.5-9B (tool output excerpt):}\par
{\small\raggedright x = 5: 2\textasciicircum{}x - 2x = 22\par}
\textbf{T2, RIVE w/o response / OpenAI GPT-4o-mini (tool output excerpt):}\par
{\small\raggedright \{5: 22, 22: 4194260, 11: 2026, 6: 52\}\par}
\textbf{T3, RIVET / DeepSeek V4-Flash (response excerpt):}\par
{\small\raggedright \textless{}solution\textgreater{}5\textless{}/solution\textgreater{}\par}
\divider\textbf{Final answers}\par
\textbf{RIVET:} 5. \textbf{Correct.} 3 turns; Cost: \$0.000343.\par
\textbf{RIVE w/o response:} 5. \textbf{Correct.} 3 turns; Cost: \$0.000453.\par
\end{casebox}
{\footnotesize Task mmlu:7.\par}\vspace{10pt}
\begin{casebox}
\refstepcounter{caseexample}\label{case:example-9}
\textbf{Example \thecaseexample. Minimizing a sum with fixed GCD and LCM}\par
\textbf{Task.} Positive integers $m,n$ have greatest common divisor 6 and least common multiple 126. What is the least possible value of $m+n$?
\divider\textbf{Turn-by-turn model choices and actions}\par
\begingroup\small\setlength{\tabcolsep}{5pt}\renewcommand{\arraystretch}{1.12}\arrayrulecolor{caseborder}
\begin{tabular}{@{}L{.055\linewidth}L{.44\linewidth}L{.44\linewidth}@{}}
\textbf{Turn} & \textbf{RIVET} & \textbf{RIVE w/o response}\\\hline
1 & \step{DeepSeek V4-Flash}{Uses DeepSeek V4-Flash and submits 60.} & \step{DeepSeek V4-Pro}{Uses DeepSeek V4-Pro, derives the factor pairs, and submits 60.}\\[3pt]
\end{tabular}\endgroup
\divider\textbf{Selected responses and tool outputs}\par
\textbf{T1, RIVET / DeepSeek V4-Flash (response excerpt):}\par
{\small\raggedright \textless{}solution\textgreater{}60\textless{}/solution\textgreater{}\par}
\textbf{T1, RIVE w/o response / DeepSeek V4-Pro (response excerpt):}\par
{\small\raggedright Coprime factor pairs of 21: (1, 21) and (3, 7).\par}
\textbf{T1, RIVE w/o response / DeepSeek V4-Pro (response excerpt):}\par
{\small\raggedright Corresponding (m, n): (6, 126) and (18, 42). Sums: 132 and 60.\par}
\divider\textbf{Final answers}\par
\textbf{RIVET:} 60. \textbf{Correct.} 1 turn; Cost: \$0.000087.\par
\textbf{RIVE w/o response:} 60. \textbf{Correct.} 1 turn; Cost: \$0.002121.\par
\end{casebox}
{\footnotesize Task math:74.\par}\vspace{10pt}
\begin{casebox}
\refstepcounter{caseexample}\label{case:example-10}
\textbf{Example \thecaseexample. Helicopter acceleration}\par
\textbf{Task.} A helicopter has height $s(t)=300t-4t^3$ metres, with $t$ in minutes. Find its acceleration at $t=5$, in $\mathrm{m}/\mathrm{min}^2$. Return an integer.
\divider\textbf{Turn-by-turn model choices and actions}\par
\begingroup\small\setlength{\tabcolsep}{5pt}\renewcommand{\arraystretch}{1.12}\arrayrulecolor{caseborder}
\begin{tabular}{@{}L{.055\linewidth}L{.44\linewidth}L{.44\linewidth}@{}}
\textbf{Turn} & \textbf{RIVET} & \textbf{RIVE w/o response}\\\hline
1 & \step{Qwen3.5-9B}{Uses Qwen3.5-9B and submits -120.} & \step{OpenAI GPT-4o-mini}{Plans to differentiate the height function.}\\[3pt]
2 & \textcolor{muted}{Episode ended} & \step{OpenAI GPT-4o-mini}{Repeats the differentiation plan without a tool call.}\\[3pt]
3 & \textcolor{muted}{Episode ended} & \step{OpenAI GPT-4o-mini}{Repeats the differentiation plan without a tool call.}\\[3pt]
4 & \textcolor{muted}{Episode ended} & \step{OpenAI GPT-4o-mini}{Repeats the differentiation plan without a tool call.}\\[3pt]
5 & \textcolor{muted}{Episode ended} & \step{OpenAI GPT-4o-mini}{Repeats the differentiation plan without a tool call.}\\[3pt]
6 & \textcolor{muted}{Episode ended} & \step{OpenAI GPT-4o-mini}{Repeats the differentiation plan without a tool call.}\\[3pt]
7 & \textcolor{muted}{Episode ended} & \step{OpenAI GPT-4o-mini}{Repeats the differentiation plan without a tool call.}\\[3pt]
8 & \textcolor{muted}{Episode ended} & \step{OpenAI GPT-4o-mini}{Repeats the differentiation plan without a tool call.}\\[3pt]
9 & \textcolor{muted}{Episode ended} & \step{OpenAI GPT-4o-mini}{Repeats the differentiation plan without a tool call.}\\[3pt]
10 & \textcolor{muted}{Episode ended} & \step{OpenAI GPT-4o-mini}{Includes symbolic-differentiation code and submits -240.}\\[3pt]
\end{tabular}\endgroup
\divider\textbf{Selected responses and tool outputs}\par
\textbf{T1, RIVET / Qwen3.5-9B (response excerpt):}\par
{\small\raggedright \textless{}solution\textgreater{}-120\textless{}/solution\textgreater{}\par}
\textbf{T9, RIVE w/o response / OpenAI GPT-4o-mini (response excerpt):}\par
{\small\raggedright Let's execute the calculations.\par}
\textbf{T10, RIVE w/o response / OpenAI GPT-4o-mini (response excerpt):}\par
{\small\raggedright a = sp.diff(v, t)\par}
\textbf{T10, RIVE w/o response / OpenAI GPT-4o-mini (response excerpt):}\par
{\small\raggedright \textless{}solution\textgreater{}\newline -240\newline \textless{}/solution\textgreater{}\par}
\divider\textbf{Final answers}\par
\textbf{RIVET:} -120. \textbf{Correct.} 1 turn; Cost: \$0.000084.\par
\textbf{RIVE w/o response:} -240. \textbf{Incorrect.} 10 turns; Cost: \$0.001307.\par
\end{casebox}
{\footnotesize Task theoremqa:41.\par}\vspace{10pt}
\endgroup

\section{Use of Generative AI}
We used generative AI tools to polish the writing, explore related
literature, assist with experiment implementation, and help check
mathematical derivations. All AI-assisted content was reviewed and verified
by the authors, who take full responsibility for the entire manuscript.

\end{document}